\newcommand{\AlgName}{{Light Stochastic Dominance Solver}\xspace}
\newcommand{\algabb}{{light-SD}\xspace}
\title{Learning to Optimize with Stochastic Dominance Constraints}
\author{
  Hanjun Dai$^1$, Yuan Xue$^2$, Niao He$^3$,  Bethany Wang$^2$, \\
  \vspace{-3mm}
  Na Li$^{1,4}$, Dale Schuurmans$^{1,5}$, Bo Dai$^{1, 6}$\\ 
  \vspace{3mm}
  $^1$Google Research, Brain Team, $^2$Google Could, Optimization AI\\
  $^3$ETH Zurich, $^4$Harvard University, $^5$University of Alberta, $^6$Georgia Tech
}
\begin{document}

\maketitle

\begin{abstract}
    In real-world decision-making, uncertainty is important yet difficult to handle. Stochastic dominance provides a theoretically sound approach for comparing uncertain quantities, but optimization with stochastic dominance constraints is often computationally expensive, which limits practical applicability. In this paper, we develop a simple yet efficient approach for the problem, the \emph{\AlgName~(\algabb)}, that leverages useful properties of the Lagrangian. We recast the inner optimization in the Lagrangian as a learning problem for surrogate approximation, which bypasses apparent intractability and leads to tractable updates or even closed-form solutions for gradient calculations. We prove convergence of the algorithm and test it empirically. The proposed~\algabb demonstrates superior performance on several representative problems ranging from finance to supply chain management. 
\end{abstract}


\section{Introduction}\label{sec:intro}
Decision making under uncertainty~\citep{kochenderfer2015decision} is an ubiquitous challenge attracting research from a wide range of communities including machine learning~\citep{sani2012risk, petrik2012approximate, tamar2013temporal, tamar2015optimizing, la2013actor}, operations research~\citep{delage2010percentile}, economics and management science~\citep{machina2013handbook}.
To date, decision methods in machine learning primarily focus on maximizing expected return, which is only applicable to cases when the decision-maker is risk-neutral. 
In reality, most decision-makers are risk-sensitive -- many are willing to give up some expected reward to protect against large losses, \ie, demonstrating risk-aversion.  
Stochastic dominance (SD), introduced in \citep{mann1947test} and \citep{lehmann1955ordered}, provides a principled approach to comparing random variables, standing out as a general and flexible model for incorporating risk aversion in decision making, applied to scenarios ranging from economics~\citep{quirk1962admissibility,rothschild1970increasing, rothschild1971increasing}, finance~\citep{dentcheva2006portfolio}, path planning~\citep{nie2012optimal}, to control and reinforcement learning~\citep{dentcheva2008stochastic, haskell2013stochastic}. 
Despite the elegance of the concept, an efficient computational recipe for ensuring stochastic dominance between general distributions remains lacking, due to the continuum nature of the criterion. In this paper, we consider a longstanding challenge~\citep{levy1992stochastic}:
\begin{center}
    \emph{Develop a {\bf simple yet efficient} algorithm for optimization with {\bf general} stochastic dominance constraints.}
\end{center}

There have been many attempts to handle stochastic dominance constraints in optimization.
\citet{ogryczak1999stochastic,ogryczak2001consistency,ogryczak2002dual} established consistency between stochastic dominance and mean-risk surrogates. Therefore, several mean-risk optimization formulations have been proposed as surrogates for evaluating stochastic dominance conditions. Although the proposed mean-risk models can be efficiently solved, these surrogates only provide a necessary condition and are unable to model the full spectrum of risk-averse preferences, and thus might lead to inferior solutions~\citep{ogryczak2001consistency}.

To bypass suboptimality from surrogates, research on original optimization with stochastic dominance was considered by \citep{dentcheva2003optimization}. Existing methods for such problems can generally be categorized by the underlying stochastic dominance formulations considered, based on $k$-th order distribution functions, utility functions, or the Strassen theorem perspective, respectively. 
Based on the direct comparison over distribution functions, \citet{dentcheva2003optimization,dentcheva2004optimality} derived a linear programming~(LP) formulation for second-order stochastic dominance constraints. 
Similarly, \citet{noyan2006relaxations} derived a mixed-integer programming~(MIP) formulation of first-order stochastic dominance constraints, which was further adopted in a cutting plane algorithm in \citep{rudolf2008optimization}. Instead of stochastic dominance based on $k$-th order distribution functions,~\citep{luedtke2008new} established a more compact LP and MIP for second-/first-order stochastic dominance constraints based on an equivalent reformulation through Strassen's theorem~\citep{strassen1965existence}. Alternatively, stochastic dominance can also be characterized from a dual view, which leads to an implementation through robust optimization with respect to utility functions~\citep{post2003empirical,armbruster2015decision}.

Even though these ideas and formulations are inspiring, they primarily focus on random variables with \emph{finite} support. It is always possible to approximate the constraints over continuous random variables by applying existing methods with the empirical distribution over samples~\citep{hu2012sample,haskell2018modeling}, but the size of the resulting optimization increases \emph{at least quadratically} with respect to the number of discretization bins, thus making the memory and computation costs unaffordable if one seeks a high-quality solution with a fine discretization. 

In this paper, we introduce a novel algorithm, \emph{\AlgName~(\algabb)}, as a viable solution to the question. The proposed~\algabb is able to handle general stochastic dominance constraints while maintaining memory and computational efficiency, thus enabling applicability to large-scale practical problems. Our development starts with the well-established Lagrangian of the optimization with stochastic dominance constraints, which induces a special structure in the dual functions and establishes a connection to utility functions. Although such a connection has been noted in the literature, it has remained unclear how this can be exploited for an efficient algorithm, due to the difficulty in optimizing with an \emph{intractable} expectation over an \emph{infinite} number of \emph{unbounded} functions. We overcome these difficulties through learning for surrogate approximation. Specifically, we approximate the intractable expectation via samples, and optimize with design special parametrized dual functions for surrogate approximation. These two strategies pave the way for a simple and efficient algorithm.

The reminder of the paper is organized as follows. First, we provide the necessary background on optimization with stochastic dominance constraints and its corresponding primal-dual form in~\secref{sec:formulation}. In~\secref{subsec:alg} we design an efficient stochastic gradient descent algorithm applicable to general stochastic dominance constraints with general random variables. By scrutinizing the conditions on the dual functions, we introduce a specially tailored parametrization for dual functions in~\secref{subsec:dual_param}. We provide the theoretical analysis of the proposed~\algabb in~\secref{subsec:comp_analysis}. Finally, in~\secref{sec:exps}, we conduct an empirical evaluation on synthetic benchmarks and real-world problems, and find that the proposed \algabb significantly improves performance in terms of both memory and computation.

Our implementation of \algabb will be released at \href{https://github.com/google-research/google-research/tree/master/lightsd}{https://github.com/google-research/google-research/tree/master/lightsd}.

\section{Preliminaries}\label{sec:formulation}
In this section, we introduce the stochastic dominance concept and specify the optimization problem under stochastic dominance constraints with concrete applications. Then, we provide the corresponding primal-dual reformulation, which inspires our algorithm in next section. 

\subsection{Optimization with Stochastic Dominance Constraints}\label{subsec:opt_sd}

\noindent\textbf{First-order dominance:} Let $X$ denote a real valued random variable. The corresponding distribution function is defined as $F\rbr{X; \eta} = P\rbr{X\le \eta}$ for $\eta\in \RR$. We say that a random variable $X$ dominates in the \emph{first order} a random variable $Y$ if the following holds: 
\begin{equation}\label{eq:first_order}
\textstyle
    X\succeq_1 Y\,\,\text{if}\,\,F(X; \eta) \le F(Y; \eta), \quad \forall \eta\in \RR.
\end{equation}
First-order stochastic dominance is describing that, when both $X$ and $Y$ are the outcome of two portfolios, $X\succeq_1 Y$ implies $P\rbr{X\ge \eta} \ge P\rbr{Y\ge \eta}$, which means for any possible return threshold $\eta$, the portfolio $X$ gives at least as high a probability of exceeding the threshold as $Y$. 

\noindent\textbf{Second-order dominance:} We say $X$ dominates $Y$ in the \emph{second order} if the following holds: 
\begin{equation}
\textstyle
X\succeq_2 Y\text{ if }\int_{-\infty}^\eta F(X; \alpha) d\alpha \le \int_{-\infty}^\eta F(Y; \alpha) d\alpha,\ \forall \eta\in \RR.
\end{equation}
Second-order dominance has significant practical value: 
it is closely linked to risk-adverse decision making~\citep{ogryczak1999stochastic,ogryczak2001consistency, ogryczak2002dual}, and robust optimization  
w.r.t.\ the family of concave, non-decreasing utility functions~\citep{armbruster2015decision}. In particular, 
second-order dominance can be equivalently defined by:
\begin{equation}
 X\succeq_2 Y \leftrightarrow E[(\eta-X)_+] \le E[(\eta-Y)_+], \forall \eta \in \RR. \label{eq:eta_2nd}
\end{equation}

\noindent\textbf{High-order dominance:} The concept of stochastic dominance has been extended to high-order. Specifically, one can recursively define the functions
\vspace{-1mm}
\begin{equation}\label{eq:dist_func}\textstyle
F_k\rbr{X; \eta} = \int_{-\infty}^\eta F_{k-1}\rbr{X;\alpha} d\alpha = \frac{1}{\rbr{k-1}!}\EE\sbr{\rbr{\eta - X}^{k-1}_+},
\end{equation}
Then, $X$ dominates $Y$ in the \emph{$k$th-order} if $F_k\rbr{X;\eta}\le F_k\rbr{Y; \eta}$ for $\forall \eta\in \RR$. By definition, $k$th-order dominance implies $(k+1)$st-order dominance if the random variables $F_{k+1}$ are well-defined. 

We are interested in the following optimization problem:
\begin{equation}\label{eq:opt_sd}
    \max_{z\in \Omega} f(z), \quad \st\quad  g(z, \xi) \succeq_k Y, 
\end{equation}
where $f(\cdot): \RR^d \rightarrow \RR$ is the objective function, $\Omega\subset \RR^d$ denotes the feasible set of decision variables, $Y$ denotes the reference random variable, and $g\rbr{\cdot, \cdot}: \RR^d\times \RR^p \rightarrow \RR$ is a given mapping expressing a deterministic outcome depending on the decision $z$ and random variable $\xi$.

The optimization~\eqref{eq:opt_sd} can be easily generalized with multiple stochastic dominance constraints, \ie, $g_i\rbr{z, \xi}\succeq_k Y_i$, $\forall i= 1, \ldots, h$, with each random outcome $g_i\rbr{z, \xi}$ stochastic dominant w.r.t. the corresponding references $Y_i$; and multivariate stochastic dominance constraints \citep{dentcheva2009optimization}.

Below we provide several concrete applications of~\eqref{eq:opt_sd}. 
\begin{itemize}
    \item {\bf Portfolio Optimization:} Consider a set of $d$ financial assets, whose returns can be captured by a random variable $\xi \in \RR^d$. The optimization problem seeks a portfolio of assets formed by an allocation vector $z \in \Omega \subset \RR_+^d,  |z| = 1$. The return of the portfolio $X= \xi^\top z$ is also a random variable. The objective of portfolio optimization is to maximize the expected portfolio return, subject to the constraint that the return dominates a reference return $Y$: 
    \begin{equation}\label{eq:port_opt} \textstyle
        \max_{z\in \Omega} E_{\xi}\sbr{\xi^\top z}, \quad \st\quad \xi^\top z \succeq_k Y, 
    \end{equation}
    $Y$ can come from a reference portfolio or market index. Under second-order dominance, this formulation guarantees  a risk-averse decision maker will not prefer $Y$ to the optimal solution of (\ref{eq:port_opt}).

    \item {\bf Optimal Transportation:} Consider a set of $m$ regions with demand expressed by a $m$-dimensional random variable $\xi \in \RR^m$.  This demand is to be fulfilled by $n$ warehouses with inventories given by an $n$-dimensional random variable $Y \in \RR^n$ (capturing uncertainty in supplier lead time and replenishment availability). The optimal transport problem seeks a transport map, represented as a decision variable $z \in \Omega \subset \RR_+^{m \times n}$, where $z_{ij}$ is the ratio of demand $\xi_i$ being fulfilled by warehouse $j$. The decision must satisfy $\forall i, \sum_{j} z_{ij} = 1 $. Let $h_{ij} \in \RR_+^{m \times n}$ be the transport cost to fulfill a unit demand from region $i$ by warehouse $j$. The objective is to minimize expected transport cost, subject to the constraint that the demand $\xi$ is dominated by the warehouse supply $Y$: 
    \begin{eqnarray}\label{eq:opt_transport}\textstyle
        \textstyle
        \max_{z\in \Omega}\, -E_{\xi} \sbr{\sum_i \sum_j h_{ij} z_{ij}\xi_i }, \\
        \textstyle
        \st\, Y_j \succeq_k \sum_i \xi_i \cdot z_{ij}, \forall j 
    \end{eqnarray}
    \vspace{-5mm}
\end{itemize}
A standard approach is to rewrite the first-/second-order stochastic dominance constraints as binary/linear constraints with $\Ocal\rbr{M^2}$ auxiliary variables, where $M$ denotes the support size for discrete variables or the number of discretization bins for continuous variables. Please refer to~\appref{appendix:reform} for details of the constraint reformulation. Clearly, the computational cost quickly becomes unacceptable for large-scale problems. Such computational difficulty is a major bottleneck in applying stochastic dominance in practice, making an efficient and general algorithm an urgent need.

\subsection{Primal-Dual Formulation}\label{subsec:primal_dual}

It is common when solving a stochastic constrained optimization to consider a relaxation where $\eta \in [a, b]$~\citep{dentcheva2003optimization}. 
Given a continuous $F_k$ defined in \eqref{eq:dist_func} over $[a, b]$, we define the Lagrangian $\Lambda^k\rbr{z, \mu}: \Omega\times \textbf{rca}([a, b])\rightarrow \RR$ of~\eqref{eq:opt_sd} and obtain:
\begin{equation}\label{eq:lagrangian_I}
    \max_z\min_{\mu\in \textbf{rca}([a, b])}\Lambda^k(z, \mu) = 
    f(z) - \int_{a}^b \sbr{F_k\rbr{g(z, \xi); \eta} - F_k(Y; \eta)}d\mu\rbr{\eta},
\end{equation}
where $\textbf{rca}([a, b])$ are the regular, countably additive measures on $[a, b]$, \ie, the dual space of the continuous function space given by the Riesz representation theorem. Under some regularity conditions, the $\Lambda^k(z, \mu)$ in~\eqref{eq:lagrangian_I} can be equivalently reformulated as 
\begin{equation}\label{eq:lagrangian_II}
    \max_z\min_{u \in \Ucal_k} L(z, u) = f(z) + \EE_\xi\sbr{u(g(z,\xi))} - \EE\sbr{u(Y)},
\end{equation}
where 
$\Ucal_k\defeq \cbr{u\rbr{\cdot}\bigg|\begin{matrix}
    \int_a^b F_k\rbr{X;\eta}d\mu(\eta) = -\EE\sbr{u(X)},\\
    \forall \mu\in \textbf{rca}([a, b])
    \end{matrix}    }$ 
denotes the feasible set of dual functions for the $k$-th order dominance constraints. Obviously, the major difference between $k$-th order stochastic dominance mainly lies in the construction of the dual functions. 
\begin{theorem}[Theorem 1~\citep{dentcheva2004semi}]\label{thm:first_u}
    Under regularity conditions on the solution to~\eqref{eq:opt_sd} under a first-order dominance condition, $\Ucal_1$ in \eqref{eq:lagrangian_II} is the set of nondecreasing and left continuous functions over $[a, b]$.
\end{theorem}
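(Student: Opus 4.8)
The plan is to make the correspondence between a multiplier measure $\mu$ and its associated dual function $u$ completely explicit, and then read off the two required properties from the structure of that correspondence. First I would apply Tonelli's theorem—legitimate because every integrand is nonnegative and $\mu$ is a finite measure on the compact interval $[a,b]$—to exchange the order of integration in the defining identity. Since $F_1(X;\eta) = \EE[\mathbf{1}\{X\le\eta\}]$, this yields
\begin{equation*}
\int_a^b F_1(X;\eta)\,d\mu(\eta) = \EE\!\left[\int_a^b \mathbf{1}\{X\le\eta\}\,d\mu(\eta)\right] = \EE\big[\mu([X,b])\big],
\end{equation*}
so the defining relation $\int_a^b F_1(X;\eta)\,d\mu(\eta) = -\EE[u(X)]$ holds precisely when $u(x) = -\mu([x,b])$, up to an additive constant (which leaves $L(z,u)$ in \eqref{eq:lagrangian_II} unchanged, because both $g(z,\xi)$ and $Y$ are genuine random variables and the constant cancels in $\EE_\xi[u(g(z,\xi))]-\EE[u(Y)]$). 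Thus $\Ucal_1$ is, modulo additive constants, the family $\{x\mapsto -\mu([x,b]) : \mu\in\textbf{rca}([a,b]),\ \mu\ge 0\}$.

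For the forward inclusion I would verify the two properties directly from this formula. Monotonicity is immediate: if $x_1 < x_2$ then $[x_1,b]\supseteq[x_2,b]$, so $\mu([x_1,b])\ge \mu([x_2,b])$ and hence $u(x_1)\le u(x_2)$. Left continuity follows from continuity from above of the finite measure $\mu$: as $x\uparrow x_0$ the nested intervals satisfy $\bigcap_{x<x_0}[x,b] = [x_0,b]$, whence $\mu([x,b])\to\mu([x_0,b])$ and therefore $u(x)\to u(x_0)$. The choice of the closed left endpoint $[x,b]$, rather than $(x,b]$, is exactly what produces left rather than right continuity, so I would keep this convention fixed throughout.

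For the converse I would invoke the Lebesgue–Stieltjes correspondence in the other direction. Given any nondecreasing, left-continuous $u$ on $[a,b]$, the nonnegative increments $u(x_2)-u(x_1)$ assigned to half-open intervals $[x_1,x_2)$ define a premeasure that, by left continuity of $u$, extends uniquely to a nonnegative Borel measure $\mu$ with $\mu([x_1,x_2)) = u(x_2)-u(x_1)$; equivalently, $u$ agrees with $x\mapsto -\mu([x,b])$ up to the additive constant absorbing the free atom $\mu(\{b\})$. Since $u$ is bounded on the compact set $[a,b]$, the resulting $\mu$ has finite mass and lies in $\textbf{rca}([a,b])$, so $u\in\Ucal_1$, completing the characterization.

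I expect the main obstacle to be bookkeeping rather than conceptual depth: pinning down the endpoint conventions so that the one-sided continuity emerges as left (not right) continuity, and confirming that the additive-constant ambiguity is harmless—the latter precisely because the Lagrangian sees $u$ only through the difference $\EE_\xi[u(g(z,\xi))] - \EE[u(Y)]$. The analytic ingredients (Tonelli, continuity from above of finite measures, and the Lebesgue–Stieltjes extension) are standard, so once the identity $u(x) = -\mu([x,b])$ is in hand the two set inclusions follow with little further work.
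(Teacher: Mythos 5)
Your proposal is correct and rests on the same core construction as the paper's proof: the Fubini/Tonelli exchange giving $\int_a^b F_1(X;\eta)\,d\mu(\eta) = \EE\bigl[\mu([X,b])\bigr]$, followed by the identification $u(x) = -\mu([x,b])$. The differences are in completeness, and they cut both ways. On the set-equality claim itself you go further than the paper: the paper constructs $u^*(\cdot) = -\mu^*([\cdot,b])$ from the non-negative KKT multiplier and simply asserts it lies in $\Ucal_1$, whereas you verify monotonicity and left continuity explicitly (correctly tracing left continuity to continuity from above of the finite measure together with the closed-left-endpoint convention) and, more substantively, you supply the converse inclusion via the Lebesgue--Stieltjes correspondence, showing every nondecreasing, left-continuous $u$ arises from some non-negative finite measure up to an additive constant that cancels in $L(z,u)$. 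That converse is genuinely needed for the claim that $\Ucal_1$ \emph{is} the whole set of such functions, and the paper's proof leaves it implicit. Conversely, the paper does something you bypass: it invokes the regularity assumptions (existence of an optimizer, a Slater-type interior condition, continuity of the CDF of $Y$) through the KKT conditions in abstract spaces to establish that a non-negative multiplier $\mu^*\in \textbf{rca}([a,b])$ exists and satisfies complementary slackness, which is how the measure-valued Lagrangian \eqref{eq:lagrangian_I} gets reformulated as \eqref{eq:lagrangian_II} without loss of optimality; your argument takes the measure-valued dual formulation as its starting point. Since the theorem as stated concerns the characterization of $\Ucal_1$, that is a defensible reading, but the KKT step is where the ``regularity conditions'' in the statement actually do their work.
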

Similarly, one can characterize the dual function for the second-order dominance constraints. 
\begin{theorem}[Theorem 4.2~\citep{dentcheva2003optimization}]\label{thm:second_u}
Under some regularity conditions, the Lagrangian can be reformulated as~\eqref{eq:lagrangian_II} with $u\rbr{\cdot}\in \Ucal_2$, $\Ucal_2$ is the set of concave and nondecreasing functions over $[a, b]$. 
\end{theorem}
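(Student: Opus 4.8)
The plan is to prove the claimed characterization by a two-sided containment, working directly from the definition of $\Ucal_2$. First I would make the defining relation concrete for $k=2$. By \eqref{eq:dist_func} we have $F_2(X;\eta)=\EE[(\eta-X)_+]$, so the defining condition reads $\int_a^b \EE[(\eta-X)_+]\,d\mu(\eta) = -\EE[u(X)]$. Reading this as an identity that must hold across distributions of $X$ and evaluating it at a point mass $X\equiv x$ (for which $F_2=(\eta-x)_+$ and $\EE[u(X)]=u(x)$) pins down the pointwise form
\[
    u(x) = -\int_a^b (\eta-x)_+\,d\mu(\eta),
\]
with $\mu$ a nonnegative measure on $[a,b]$; nonnegativity comes from $\mu$ being the multiplier of the inequality constraint in \eqref{eq:lagrangian_I}. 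Hence $\Ucal_2$ is exactly the family of functions admitting such a representation, and the theorem reduces to identifying that family with the concave nondecreasing functions on $[a,b]$.

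For the ``soundness'' inclusion I would observe that each elementary block $x\mapsto -(\eta-x)_+=\min(x-\eta,0)$ is, for fixed $\eta$, a minimum of two affine functions, hence concave, and is clearly nondecreasing in $x$. Since $\mu\ge 0$, the function $u$ is a nonnegative superposition of these blocks, and both concavity and monotonicity are preserved under integration against a nonnegative measure. This direction is routine.

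The substantive part is the converse representation: every concave nondecreasing $v$ on $[a,b]$ must be realizable in the above form. Here I would use that concavity furnishes a nonincreasing right-derivative $v'_+$ on $[a,b)$, while monotonicity gives $v'_+\ge 0$. I would then define $\mu$ as the (nonnegative) Lebesgue--Stieltjes measure associated with the nonincreasing function $v'_+$, arranged so that its tail satisfies $\mu((x,b])=v'_+(x)$. Differentiating the candidate $u(x)=-\int_a^b(\eta-x)_+\,d\mu(\eta)$ under the integral gives $u'_+(x)=\mu((x,b])=v'_+(x)$, so $u$ and $v$ share the same derivative and therefore differ only by a constant. Because a constant added to the dual function contributes $+c-c=0$ to $L(z,u)$ in \eqref{eq:lagrangian_II}, that constant is immaterial, so $v$ is identified with an element of $\Ucal_2$.

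The main obstacle is the boundary behaviour at $\eta=b$. The representation automatically forces $u(b)=0$, and, more delicately, affine pieces with nonzero slope (where $v'_+(b^-)>0$) require $\mu$ to carry an atom at $b$ --- indeed the block $-(b-x)_+=x-b$ is exactly linear, so an atom of mass $m$ at $b$ contributes slope $m$ everywhere on $[a,b)$. I would handle this by allowing atoms, checking that the tail identity $\mu((x,b])=v'_+(x)$ remains valid for all $x<b$ with the atom included, and invoking the stated regularity conditions both to justify differentiation under the integral sign and to dispose of the single endpoint $x=b$. Reconciling these left/right-continuity and endpoint conventions, rather than the integral representation itself, is where the care is needed.
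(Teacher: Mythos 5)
Your proof is correct, and it reaches the paper's characterization by a genuinely more self-contained route. The paper does not argue the $k=2$ case directly: it obtains \thmref{thm:second_u} as the specialization of a general high-order result (Theorem 7.1 of Dentcheva--Ruszczy\'nski, reproved in \appref{appendix:second_u}), whose proof invokes the KKT theorem in abstract spaces (Bonnans--Shapiro) to produce a non-negative multiplier $\mu^*\in\textbf{rca}([a,b])$, applies Fubini to rewrite $\int_a^b F_k(Y;\eta)\,d\mu(\eta)=\int_{-\infty}^b \mu([t,b])\,F_{k-1}(Y;t)\,dt$, defines the dual function implicitly through its derivative, $u^{(k-1)}(t)=(-1)^k\mu([t,b])$, and integrates by parts $k-1$ times to reach $-\EE\sbr{u(Y)}+C$; for $k=2$, ``$u'$ non-negative and nonincreasing'' is exactly monotonicity plus concavity. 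Your argument lands on the same correspondence---your identity $u'_+(x)=\mu((x,b])$ \emph{is} the paper's defining relation, and your differentiation under the integral is its integration by parts read backwards---but organizes it differently: you fix $k=2$, exploit the closed form $F_2(X;\eta)=\EE\sbr{(\eta-X)_+}$, pin down $u$ pointwise by testing the defining identity on point masses, and then prove the set identification by explicit two-sided containment, with the converse (every concave nondecreasing $v$ arises from some non-negative $\mu$) built from the Lebesgue--Stieltjes measure of $v'_+$ plus an atom at $b$. That surjectivity step, and the observation that additive constants cancel in $L(z,u)$, are left implicit in the paper, so your write-up is in this respect more complete for the second-order case; what the paper's route buys instead is uniformity in $k$---the same scaffolding covers all orders at once and directly motivates the neural parametrization \eqref{eq:neural_highorder}. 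One caveat applies to both treatments: with a \emph{finite} measure $\mu$ (as membership in $\textbf{rca}([a,b])$ requires), the representation only captures concave nondecreasing functions with bounded one-sided derivative, since your $\mu$ has total mass $v'_+(a^+)$, which can be infinite (\eg, $v(x)=\sqrt{x-a}$); this matches the boundedness imposed on $\phi$ in the paper's high-order theorem and is absorbed into the ``regularity conditions'' of the statement, so it is a shared simplification rather than a gap in your argument.
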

This primal-dual view connects stochastic dominance with the utility function perspective~\citep{von2004theory}. In fact, instead of fully specifying the utility function, 
\eqref{eq:lagrangian_II} can be understood as a robust optimization with respect to all possible utilities, ensuring for the worst-case utility function in the set that the solution is still preferable to the reference strategy, therefore, demonstrating the benefits of the stochastic dominance comparison. 

Although the duality relation and optimality condition through the Lagrangian~\eqref{eq:lagrangian_I} and~\eqref{eq:lagrangian_II} has been established in~\citep{dentcheva2003optimization} (please refer to~\appref{appendix:proofs} for completeness), the Lagrangian forms are still intractable in general due to 
\begin{itemize}
    \item[{\bf i)},]  the \emph{intractable} expectation in the optimization,
    \item[{\bf ii)},] the \emph{infinite} number of \emph{unbounded} functions in dual feasible set $\Ucal_k$. 
\end{itemize}


\section{Light Stochastic Dominance Solver}\label{sec:method}

In this section, we exploit machine learning techniques to bypass the two intractabilities of Lagrangian~\eqref{eq:lagrangian_II} discussed in previous section.  
These machine learning techniques eventually lead to a simple yet efficient algorithm for general optimization problems with stochastic dominance, while obtaining rigorous guarantees.

\subsection{Stochastic Approximation}\label{subsec:alg}

We first introduce the stochastic approximation scheme for the Lagrangian~\eqref{eq:lagrangian_II} to handle the intractable expectation. 
Specifically, we define 
\begin{equation} \textstyle
\Lhat(z, u) = f(z) +\frac{1}{N}\sum_{i=1}^N \rbr{u(g(z, \xi_i)) - u(y_i)}
\end{equation}
with $\cbr{\xi_i}_{i=1}^N\sim p(\xi)$ and $\cbr{y_j}_{j=1}^N \sim p(y)$. 
Consider a stochastic approximation algorithm that alternates between solving $\uhat^* = \argmin_{u\in\Ucal} \Lhat(z, u)$, and updating $z$ with a stochastic gradient $\nabla_z \Lhat(z, \uhat^*)$, which leads to~\algref{alg:neu_demo_sgd}.  
\begin{algorithm}[thb] 
\caption{\AlgName~(\algabb)} \label{alg:neu_demo_sgd}
  \begin{algorithmic}[1]
    \STATE Initialize $z$ randomly, set number of total iterations $T$. 
    \FOR{iteration $t=1, \ldots, T$}
        \STATE Sample Sample $\cbr{\xi_i, y_i}_{i=1}^{n}$.
        \STATE Compute $\uhat^* = \argmin_{u\in\Ucal} \Lhat(z, u)$.
        \STATE $z \leftarrow z + \gamma_t\nabla_z \hat{L}(z, \uhat^*)$.
    \ENDFOR
  \end{algorithmic}
\end{algorithm}

In~\algref{alg:neu_demo_sgd}, we avoid the first difficulty by using the stochastic approximation~$\Lhat(z, u)$ to replace intractable expectation. However, optimization over an unbounded function space $\Ucal$ is in general intractable. Next, we design the special dual parametrization, which induces an efficient solver for $\uhat^*$ in~\secref{subsec:dual_param} to bypass the optimization over an infinite dual feasible set.

\subsection{Dual Parametrization}\label{subsec:dual_param}

To bypass the computational intractable and PAC unlearnable optimization over infinitely many unbounded functions in the dual feasible set, we introduce bounded parameterized dual functions, thus reducing the semi-infinite programming problem to a finite parameter optimization. However, arbitrary parameterizations will generate functions outside the feasible set of utility functions, which breaks the requirements of stochastic dominance with specific order. 

In this section, inspired by the proofs of~\thmref{thm:first_u} and~\thmref{thm:second_u}, we design specific parametrizations that cover the corresponding \emph{bounded} utility function sets, which balances the tradeoff between sample complexity and relaxation of the optimization. Meanwhile, for the commonly used first and second order stochastic dominance constraints, our parametrization induces an efficient dual solution, which further reduces the computation complexity. We also introduce the generic parametrization of the dual for high-order dominance problems.

\subsubsection{Parameterization of $\Ucal_1$}

Following~\thmref{thm:first_u}, one straightforward idea is to parametrize the whole family of non-decreasing functions, \eg,~\citep{sill1997monotonic,dugas2009incorporating,wehenkel2019unconstrained}, which, however, is unbounded function space, and thus, not only computationally intractable but also sample complexity unbounded. 
Therefore, we do not directly work with all non-decreasing functions, but a restricted dual function space as a surrogate: 
\begin{equation}\label{eq:step_dual_1}
\textstyle
\bar\Ucal_1 \defeq \cbr{u(x) = \EE_{p(\eta)}\sbr{\one\rbr{x \le \eta}}, p(\eta)\in \Delta([a, b])},
\end{equation}
where $\Delta([a, b])$ is the set of distribution over $[a, b]$. This dual actually can be understood as seeking $\eta\in[a, b]$, such that $F(X; \eta) > F(Y; \eta)$. 
This bounded function space relaxes the original optimization, but makes the problem statistical learnable as we discussed in~\appref{appendix:first_u}.

With this dual parametrization, the optimal dual function can be easily obtained. For stochastic approximation with finite samples $\Xtil\sim P(X)$, let us define 
\begin{equation}\textstyle
    h_{\tilde{X}}(\eta) = \sum_{x \in \tilde{X}} \II\rbr{\eta \geq x}.  
\end{equation}
We overload the notation a bit to denote $h_{\cbr{\xi_i}}(\eta) = \sum_{i=1}^N \II \rbr{\eta \geq g(z, \xi_i)}$. 

Given sample $\cbr{\xi_i, y_i}_{i=1}^N$, we define 
\[
\mu^*(\cbr{\xi_i}, \cbr{y_i}) = \cbr{\eta: \eta \in \RR \text{ and } h_{\cbr{\xi_i}}(\eta) > h_{\cbr{y_i}}(\eta)},\]
which is the set of the values of $\eta$ that violate the constraints. We omit the arguments of $\mu^*$ when appropriate. Based on the understanding of dual functions in~\eqref{eq:step_dual_1}, the optimal dual function is constructed by putting probability mass on the step-function with threshold in the set $\mu^*$, \ie,   
\begin{equation} \textstyle
    \Ucal^*_1 = \cbr{ u_{\eta}(x) := -\II (\eta \geq x), \eta \in \mu^*}.
\end{equation}
and optimal dual $u_{\eta^*}$, where $\eta^* = \argmax_{\eta\in \Ucal_1^*} h_{\cbr{\xi}}\rbr{\eta} - h_{\cbr{y}}\rbr{\eta}$. When $\cbr{\xi, y}_{i=1}^N$ has no duplication, we can use any convex combination in $\Ucal_1^*$.

\begin{figure}[h!]
    \centering
    \includegraphics[width=0.48\textwidth]{./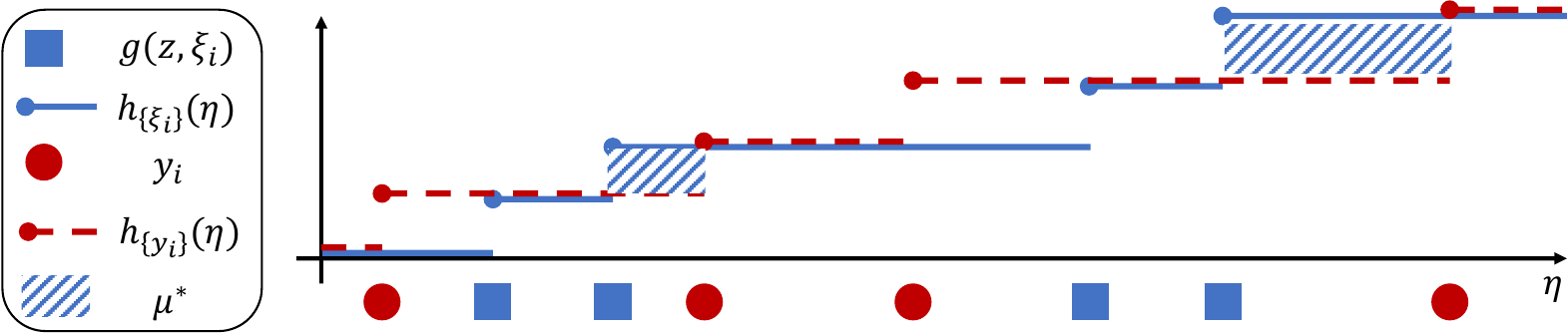}
    \vspace{-3mm}
    \caption{\em Designed dual function for first order constraints.}
    \label{fig:dual_1st}
\end{figure}
The relations among samples $g(z, \xi_i), y_i$, the corresponding $h_{\cbr{\xi_i}}, h_{\cbr{y_i}}$ and the resulting $u^*$ can be depicted in Figure~\ref{fig:dual_1st}. 

Although the above design of the functional space $\bar\Ucal_1$ is not exhaustive, it suffices to show that the dominance constraints are satisfied on the current batch of samples $\cbr{\xi_i}$ and $\cbr{y_i}$ only when $\bar\Ucal_1$ is an empty set for this batch of samples. 

\paragraph{Practical implementation.} Note that although $\mu^*$ may contain infinitely many values, only a finite number of them would result in distinct gradients with respect to $z$. Particularly, 
a finite set $\tilde{\mu}^*$ overlapped with the actual samples would be sufficient:
\begin{equation}\label{eq:finite_point}
    \tilde{\mu}^* = \mu^* \bigcap \cbr{\cbr{g(z, \xi_i} \cup \cbr{y_i}}.
\end{equation}
Furthermore, note that the above function $\mu_{\eta}$ is a step function which is not continuous, and thus, causes difficulty for gradient-based optimization. To resolve this issue, inspired from neural network approximation~\citep{barron1993universal}, we approximate the function with a smoothed basis, \ie, 
\begin{equation} \textstyle
   \tilde{\Ucal}^*_1 = \cbr{ \tilde{u}_{\eta}(x) := - \tanh\rbr{ \frac{\eta - x}{\tau} }, \eta \in \tilde{\mu}^*  }.
\end{equation}
With the above construction, we exploit all functions in~$\tilde{\Ucal}^*_1$ to obtain $\uhat^*(\cdot) = \sum_{u\in \tilde{\Ucal}^*_1}u(\cdot)$, upon which we can have the stochastic gradient for the constrained optimization as
 \begin{equation}
 \textstyle
     \nabla_z \hat{L}(z, \uhat^*) = \nabla_z f(z) + \frac{1}{N}\sum_{i=1}^N \frac{1}{\nbr{\tilde{\mu}^*}} \sum_{u \in \tilde{\Ucal}_1^*} \nabla_z u(g(z, \xi_i)). \label{eq:grad_secondorder}
 \end{equation}

\subsubsection{Parameterization of $\Ucal_2$}

Similar to the above case, we design the dual function 
\begin{equation}
    \bar\Ucal_2\defeq \cbr{u(x) = \EE_{p(\eta)}\sbr{(\eta - x)_+, p(\eta)\in \Delta\rbr{a, b}}},
\end{equation}
which essentially seeks $\eta$ such that $F_2\rbr{X;\eta}>F_2\rbr{Y; \eta}$. Please refer to~\appref{appendix:second_u} for the discussion about the approximation induce by boundedness.

We follow the same paradigm to first define the function 
\begin{equation} \textstyle
   h_{\tilde{X}}(\eta) := \sum_{x \in \tilde{X}} (\eta - x)_+ 
\end{equation}
and correspondingly the overloaded definition of $h_{\cbr{\xi_i}}(\eta) := \sum_{i=1}^N (\eta - g(z, \xi_i))_+$. 
The set of $\eta$ that would violate the second order stochastic dominance constraints is defined similarly where 
\[
\mu^* = \cbr{\eta: \eta \in \RR \text{ and } h_{\cbr{\xi_i}}(\eta) \geq h_{y_i}(\eta)}.
\]
In this case, the basis of constructing optimal dual function can be derived as
\begin{equation} \textstyle
    \Ucal_2^* = \cbr{u_{\eta}(x) := -(\eta - x)_+, \eta \in u^*}, \label{eq:u2}
\end{equation}
and we can obtain the optimal dual by putting mass on the $\eta^* = \argmax_{\eta\in\Ucal_2^*} h_{\cbr{\xi}}\rbr{\eta} - h_{y}\rbr{\eta}$. 

\begin{figure}[h!]
    \centering
    \includegraphics[width=0.48\textwidth]{./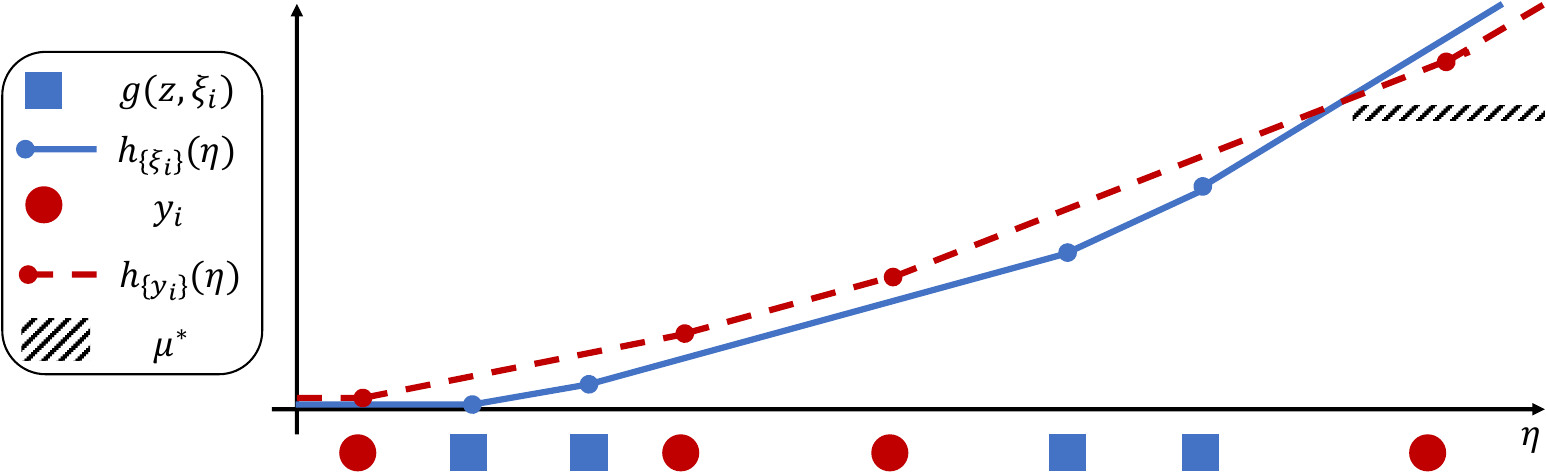}
    \vspace{-3mm}
    \caption{\em Designed dual function (slope is rescaled for visualization) for second order constraints.}
    \label{fig:dual_2nd}
\end{figure}
The relations among samples $g(z, \xi_i), y_i$, the corresponding $h_{\cbr{\xi_i}}, h_{\cbr{y_i}}$ and the resulting $u^*$ can be depicted in Figure~\ref{fig:dual_2nd}. 

One can also see through Figure~\ref{fig:dual_1st} and Figure~\ref{fig:dual_2nd} that, the first order stochastic dominance constraints are strictly stronger than the second order ones, as $\mu^*$ here is a subset of which in the first order case.  

\paragraph{Practical implementation.} Similar to the first order stochastic dominance, here only a finite subset of $\mu^*$ is needed to obtain the stochastic gradients, which is analogous to the previous case in~\eqref{eq:finite_point}. Therefore, we use $\uhat\rbr{x} = -\frac{1}{\nbr{\tilde{\mu}^*}}\sum_{\eta\in \tilde{\mu^*}}\rbr{\eta - x}_+$, which is a ReLU network, and the gradient w.r.t. $x$ can also be calculated with~\eqref{eq:grad_secondorder} using the obtained $\uhat$. Although we use the average $\uhat$, instead of the optimal dual solution with $\eta^*$. It is easy to check $\uhat$ is a psuedo-gradient, and still preserves convergence~\citep{poljak1973pseudogradient,yang2019learning}.

\subsubsection{Parameterization of $\Ucal_k$}\label{subsubsec:high-order}

In our paper we mainly focus on the first and second order SD constraints, as these are most practically useful ones. For completeness we present the derivations of higher-order SD constraints here.
The derivation of the Lagrangian for~\eqref{eq:opt_sd} with high-order dominance constraints also suggests a neural network architecture for parameterizing functions in $\Ucal_k$, as shown in~\appref{appendix:second_u}. Specifically, we consider the function parametrization as:
\begin{equation}\label{eq:neural_highorder}
\textstyle
    u_\theta(x) = \int_a^x\ldots \int_a^{x_{k-2}} \rbr{\int_a^{x_{k-1}} \phi_\theta(t)dt  + \beta}d x_{k-2}\ldots dx_{1},
\end{equation}
where $\phi_\theta(t)\ge 0$ is a bounded non-negative function and can be easily satisfied by applying activation functions like $\mathtt{relu}$ or $\mathtt{softplus}$ to the output layer. Since the integration is one-dimensional, the numerical quadrature can be used in the forward-pass. This parametrization directly follows the derivation of dual function for high-order stochastic dominance, therefore, satisfying the requirements.

For higher-order SD we can no longer achieve a closed-form solution even with finite samples. We consider gradient descent for seeking the optimal inner solution. Specifically, the gradient calculation can be easily implemented by exploiting the linearity of integration and differentiation, \ie, 
\begin{equation}
    \nabla_\theta u(x) = 
    \int_a^x\ldots \int_a^{x_{k-2}} \rbr{\int_a^{x_{k-1}} \nabla_\theta\phi_\theta(t)dt  + \beta}d x_{k-2}\ldots dx_{1}.
\end{equation}
The gradient calculation is quite similar to the forward-pass through numerical quadrature layer, but with gradients w.r.t. the parameters of first layer.

\subsection{Theoretical Analysis}\label{subsec:comp_analysis}

\begin{table}[t]
\begin{center}
\caption{\em Computation and Memory Cost per Iteration. \label{table:cost_comp}}
\begin{tabular}{c|c|c|c}
\hline
Algorithms &\# of Variables  & Memory   &Computation    \\
\hline
\algabb &$\Theta(N)$ &$\Theta(N)$ &$\widetilde\Ocal(N)$   \\
LP-based &$\Theta(N^2)$  &$\Theta(N^2)$  &$\Ocal(N^3)$ \\
\hline
\end{tabular}
\end{center}
\end{table}

The proposed~\algabb can be understood as optimization with a learned surrogate approximation. In fact, for second-order case, we can characterize the global optimal convergence with the learned surrogate.  
\begin{theorem}\label{thm:global_opt}
 If $f$ and $g$ are both concave, $\ell\rbr{z}\defeq \min_{u\in\bar\Ucal_2([a, b])}L(z, u)$ is concave with respect to $z$. 
 With stepsize $\gamma_t = \Ocal\rbr{\frac{1}{\sqrt{T}}}$, we define $\zbar_T = \frac{\sum_{t=1}^T\gamma_t z_t}{\sum_{t=1}^T\gamma_t}$ where $\cbr{z_t}_{t=1}^T$ are the iterates from \algabb, and $z^*=\argmax_{z\in \Omega}\ell\rbr{z}$ is the optimal solution. Under the assumption that $\nbr{\nabla_z f(z)}_2^2\le C_f^2$ and $\nbr{\nabla_z g(z, \xi)}_2^2\le C_g^2$, with probability $1-\delta$, we have
\begin{equation*}
\textstyle
    \EE\sbr{\ell\rbr{z^*} - \ell\rbr{\zbar_T} } = \Ocal\rbr{{\frac{\rbr{\abr{a}+\abr{b}} + \sqrt{\log\rbr{1/\delta}}}{\sqrt{N}}} + \frac{1}{\sqrt{T}}}. 
\end{equation*}
\end{theorem}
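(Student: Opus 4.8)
The plan is to prove the two assertions separately and then bound the optimality gap $\ell(z^*)-\ell(\bar z_T)$ by a sum of an \emph{optimization error} and a \emph{statistical error}. For the concavity claim, I would fix any $u\in\Ucal_2$: by~\thmref{thm:second_u} such $u$ is concave and nondecreasing, and since $g(\cdot,\xi)$ is concave, the composition $u\rbr{g(\cdot,\xi)}$ is concave; taking $\EE_\xi$, adding the concave $f$, and adding the $z$-independent term $-\EE\sbr{u(Y)}$ preserve concavity, so $L(\cdot,u)$ is concave. As $\ell=\inf_{u\in\Ucal_2}L(\cdot,u)$ is a pointwise infimum of concave functions it is concave, and the identical argument shows each empirical objective $\hat\ell_t(z)\defeq\min_{u\in\Ucal_2}\Lhat_t(z,u)$ is concave, where $\Lhat_t$ denotes $\Lhat$ built from the fresh round-$t$ minibatch.

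Next I would certify that the update direction is a bona fide supergradient of the sampled objective. Since $\hat u_t^*$ attains the inner minimum at $z_t$, we have $\hat\ell_t(z)\le\Lhat_t\rbr{z,\hat u_t^*}$ for all $z$ with equality at $z_t$; thus the concave function $\Lhat_t\rbr{\cdot,\hat u_t^*}$ is an upper envelope touching $\hat\ell_t$ at $z_t$, and its gradient $\nabla_z\Lhat_t\rbr{z_t,\hat u_t^*}$ is a supergradient of $\hat\ell_t$ at $z_t$ (Danskin/envelope theorem). Its norm is at most $G\defeq C_f+C_g$, because every $u\in\Ucal_2$ is $1$-Lipschitz---its derivative equals $\EE_{p(\eta)}\sbr{\one\rbr{\eta\ge x}}\in[0,1]$---so $\nbr{\nabla_z u\rbr{g(z,\xi)}}\le C_g$ and $\nbr{\nabla_z f}\le C_f$. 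Assuming $\Omega$ convex and bounded with diameter $D$, the standard projected-ascent identity together with $\gamma_t=\Ocal\rbr{1/\sqrt T}$ yields, along the trajectory, $\tfrac{\sum_t\gamma_t\rbr{\hat\ell_t(z^*)-\hat\ell_t(z_t)}}{\sum_t\gamma_t}=\Ocal\rbr{GD/\sqrt T}$. Finally, concavity of $\ell$ and Jensen give $\ell(\bar z_T)\ge\tfrac{\sum_t\gamma_t\ell(z_t)}{\sum_t\gamma_t}$, hence $\ell(z^*)-\ell(\bar z_T)\le\tfrac{\sum_t\gamma_t\rbr{\ell(z^*)-\ell(z_t)}}{\sum_t\gamma_t}$.

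The crux, which I expect to be the main obstacle, is controlling the statistical error remaining after inserting $\hat\ell_t$ into each summand via $\ell(z^*)-\ell(z_t)=\rbr{\ell(z^*)-\hat\ell_t(z^*)}+\rbr{\hat\ell_t(z^*)-\hat\ell_t(z_t)}+\rbr{\hat\ell_t(z_t)-\ell(z_t)}$: the middle term reproduces the $\Ocal\rbr{1/\sqrt T}$ bound above, while the outer two require a uniform law of large numbers. Using $\abr{\hat\ell_t(z)-\ell(z)}=\abr{\min_u\Lhat_t(z,u)-\min_u L(z,u)}\le\sup_{u\in\Ucal_2}\abr{\Lhat_t(z,u)-L(z,u)}$, and exploiting that the round-$t$ minibatch is drawn independently of $z_t$, I would reduce the problem to bounding this supremum at a single fixed $z$---i.e.\ a uniform deviation of empirical means over the one-dimensional class $\Ucal_2$ restricted to $[a,b]$, avoiding any covering of $\Omega$. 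The two ingredients are: every $u\in\Ucal_2$ is monotone with range $\Ocal\rbr{b-a}=\Ocal\rbr{\abr{a}+\abr{b}}$ on $[a,b]$, so by bracketing-entropy/Dudley bounds for bounded monotone functions its Rademacher complexity is $\Ocal\rbr{\rbr{\abr{a}+\abr{b}}/\sqrt N}$; and McDiarmid's inequality, where a single-sample perturbation moves the supremum by $\Ocal\rbr{\rbr{\abr{a}+\abr{b}}/N}$, supplies the $\Ocal\rbr{\rbr{\abr{a}+\abr{b}}\sqrt{\log(1/\delta)/N}}$ tail. Together these give $\sup_{u}\abr{\Lhat_t-L}=\Ocal\rbr{\tfrac{\rbr{\abr{a}+\abr{b}}+\sqrt{\log(1/\delta)}}{\sqrt N}}$ with probability $1-\delta$, bounding each outer term. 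Summing the optimization and statistical errors and taking expectation over the trajectory then yields the claimed rate; the delicate points are the independence-based reduction from a uniform-over-$\Omega$ to a uniform-over-$\Ucal_2$ statement, and the complexity estimate for the infinite-dimensional dual class $\Ucal_2$.
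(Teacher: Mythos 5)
Your proposal is correct and follows essentially the same route as the paper's proof: the identical concavity-by-composition argument, the same SGD-style recursion on $\nbr{z_t-z^*}_2^2$ with gradient norm bound $C_f+C_g$, and the same three-term decomposition whose outer terms are controlled by $\sup_{u\in\Ucal_2}\abr{\Lhat(z,u)-L(z,u)}$, bounded via Rademacher complexity plus concentration to get $\Ocal\rbr{\rbr{\rbr{\abr{a}+\abr{b}}+\sqrt{\log\rbr{1/\delta}}}/\sqrt{N}}$. The only minor technical difference is that the paper bounds the complexity of $\Ucal_2$ by viewing it as the convex hull of the ReLU family $\cbr{\rbr{\eta-\cdot}_+:\eta\in[a,b]}$ (convex hulls preserve Rademacher complexity, Lemma 7.4 of \citealp{mohri2018foundations}), whereas you invoke bracketing-entropy bounds for bounded monotone functions; both yield the same $\Ocal\rbr{\rbr{\abr{a}+\abr{b}}/\sqrt{N}}$ rate.
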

The theorem implies that besides the standard convergence rate for stochastic gradient for convex function, \ie, $\Ocal\rbr{\frac{1}{\sqrt{T}}}$, there is an extra error $\Ocal\rbr{{\frac{\rbr{\abr{a}+\abr{b}} + \sqrt{\log\rbr{1/\delta}}}{\sqrt{N}}} }$, which comes from optimization using the learned surrogate with $N$-finite samples. To balance these two errors, one can use $T$ samples in each iteration, which leads to total error in rate $\widetilde\Ocal\rbr{\frac{1}{\sqrt{T}}}$. With fewer samples in each batch, \eg, $N\sim \Ocal\rbr{T^{\alpha}}$ with $\alpha\in (0, 1)$, we obtain $\tilde\Ocal\rbr{T^{-\frac{\alpha}{2}}}$. 

The proof is obtained by characterizing the approximation error induced in each step in the mini-batch stochastic gradient descent. Due to the space limitation, we omit the details. Please refer to~\appref{appendix:global_opt} for the complete proof. The proof in fact can be of independent interests. It improves the results from~\citep{nouiehed2019solving} by exploiting stochastic gradient to make the computation tractable. Meanwhile, we relax the strongly convex requirement. Comparing to~\citep{hu2021bias}, the proof considers the mini-batch in the algorithm, while the algorithm proposed in~\citep{hu2012sample} requires full batch updates.

The proposed~\algabb is efficient in terms of both memory and computation cost, bypasses notorious difficulties in realizing stochastic dominance constraints, as discussed in~\appref{appendix:reform}. We compare the computation and memory cost with the existing LP-based algorithm in~\tabref{table:cost_comp}. With an interior-point solver for LP~\citep{nesterov1994interior}, the proposed~\algabb reduces both computation and memory cost to linear w.r.t. $N$, therefore, is scalable for practical problems.

Although we focused on SD over scalar random variable, \algabb can be extended to multivariate SD constraints~\citep{dentcheva2009optimization}, where one extra random projection will be introduced to project multivariate random variables to scalar.

\section{RELATED WORK}\label{sec:related_work}

This work bridges several topics, including robust optimization, decision making under uncertainty, ML for optimization, and optimization with approximated gradients. 

\textbf{ML for optimization.} Leveraging machine learning to help optimization has raised a lot of interest in recent years. Some representative works include leveraging reinforcement learning~\citep{khalil2017learning, bello2016neural}, unsupervised learning~\citep{karalias2020erdos}, 
and learning guided search~\citep{li2018combinatorial} for combinatorial optimization and stochastic optimization~\citep{dai2021neural}.
One main goal has been to leverage the generalization ability of neural networks to help solve new instances from the same distribution better or faster. While these successes inspired and motivated this work, one key distinction is that we solve individual problems with guarantees and without meta-training. This principle is similar to model-based black-box optimization~\citep{snoek2012practical, papalexopoulos2022constrained}, where surrogate models are estimated iteratively and used to guide the optimization.
To the best of our knowledge, this is the first stochastic approach for solving optimization problems with stochastic dominance constraints that is both scalable and achieves provable global convergence.

\textbf{Robust optimization/decision making under uncertainty.}
Robust optimization and decision making under uncertainty have been longstanding topics in artificial intelligence.
For example, utility elicitation \citep{chajewska2000making,boutilier2006constraint} considers the problem of maximizing expected utility under feasibility constraints without precise knowledge of the utility function.
A standard approach is to minimize mini-max regret subject to constraints on possible utility functions, thereby achieving worst case robustness to the true underlying utility.
Much of the work in this area has considered structured problem formulations using graphical models, to simply the constraints and utility function forms.
Here we handle more flexible formulations through neural network parameterizations of the dual functions.

\section{EXPERIMENTS}\label{sec:exps}

\begin{figure*}[t]
    \centering
\begin{tabular}{cc}
Portfolio Optimization for 100 stocks & Stochastic OT on large networks \\
\includegraphics[width=0.49\textwidth]{./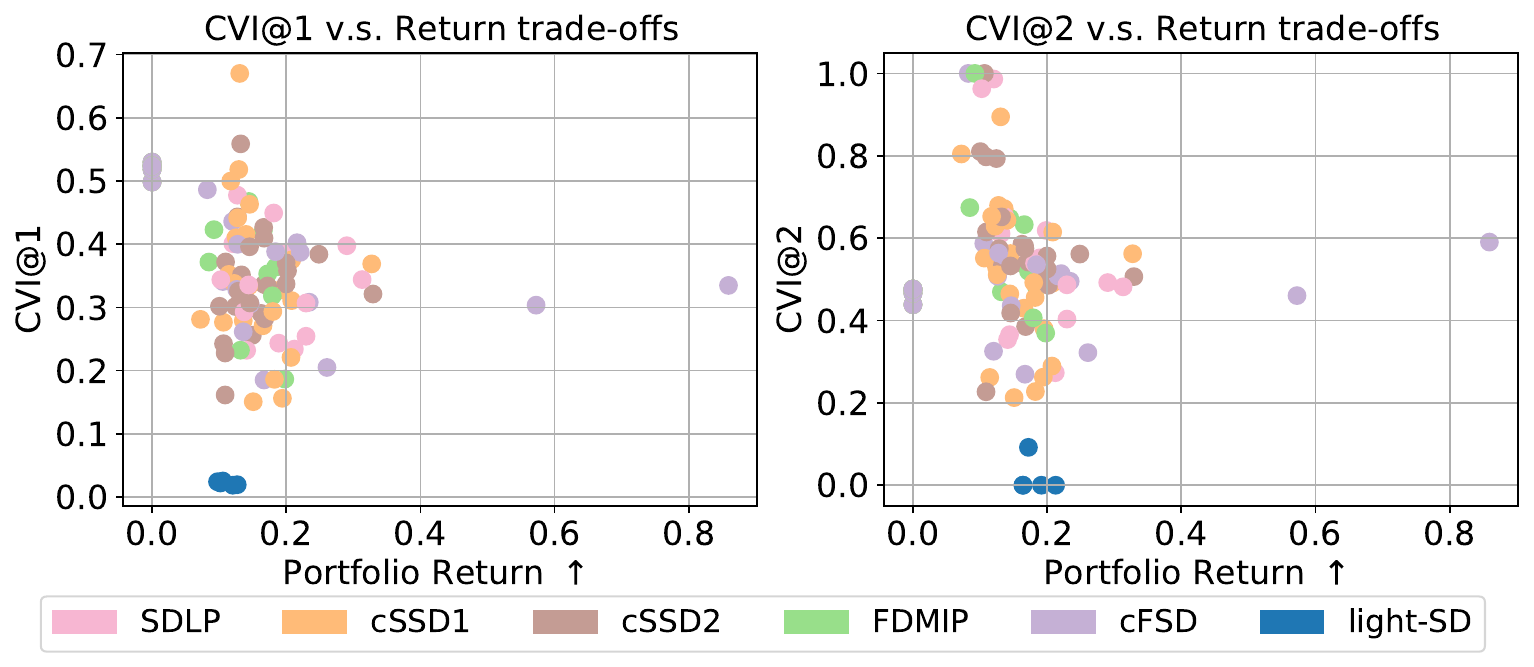} &
\includegraphics[width=0.49\textwidth]{./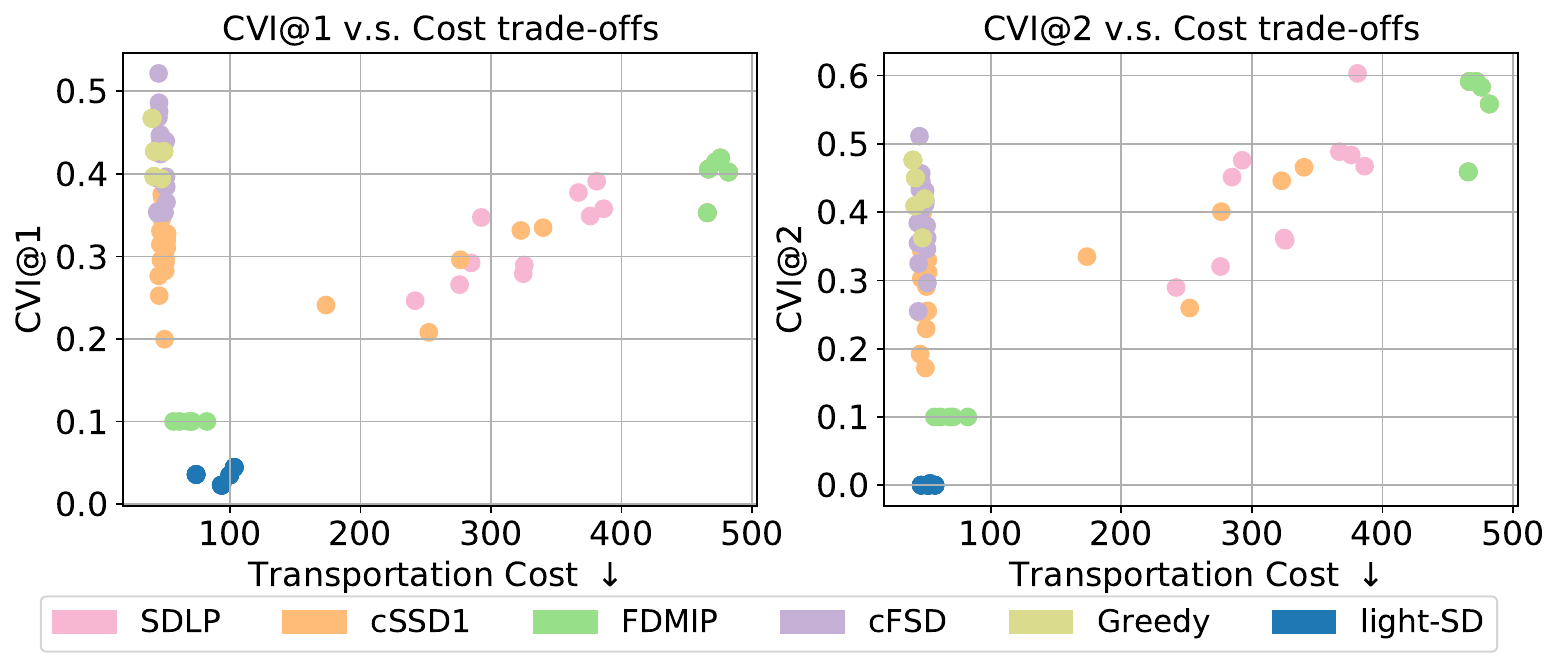} 
\end{tabular}
\vspace{-5mm}
\caption{Each dot in the above figure represents the objective and CVI of corresponding solution obtained by different methods over multiple random seeds. Generally \algabb achieves better objective value (\ie, higher return or lower cost) and lower CVI compared to alternative methods. \label{fig:err_obj_scatter}}
\vspace{-3mm}
\end{figure*}

\begin{figure*}[t]
    \centering
\begin{tabular}{cc}
\toprule
Portfolio Optimization for 100 stocks & Stochastic OT on large networks \\ \includegraphics[width=0.49\textwidth]{./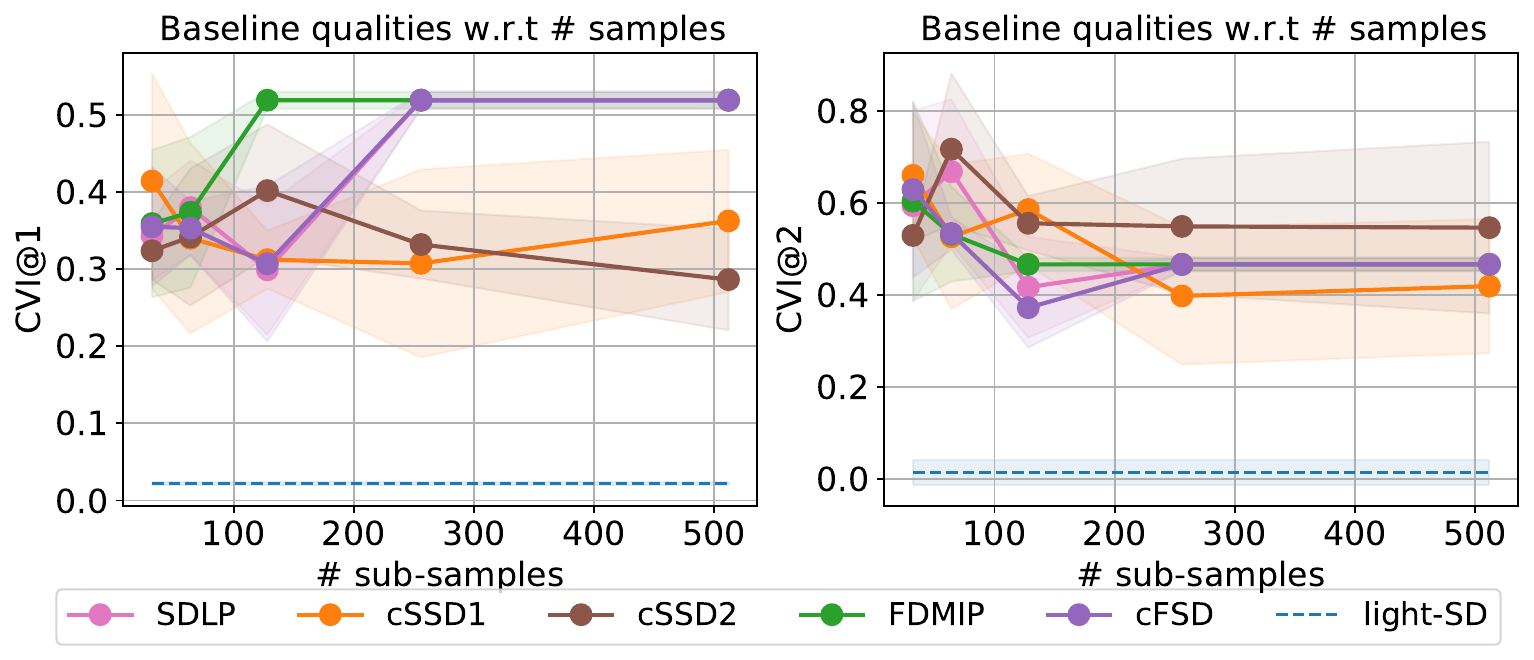} &  \includegraphics[width=0.49\textwidth]{./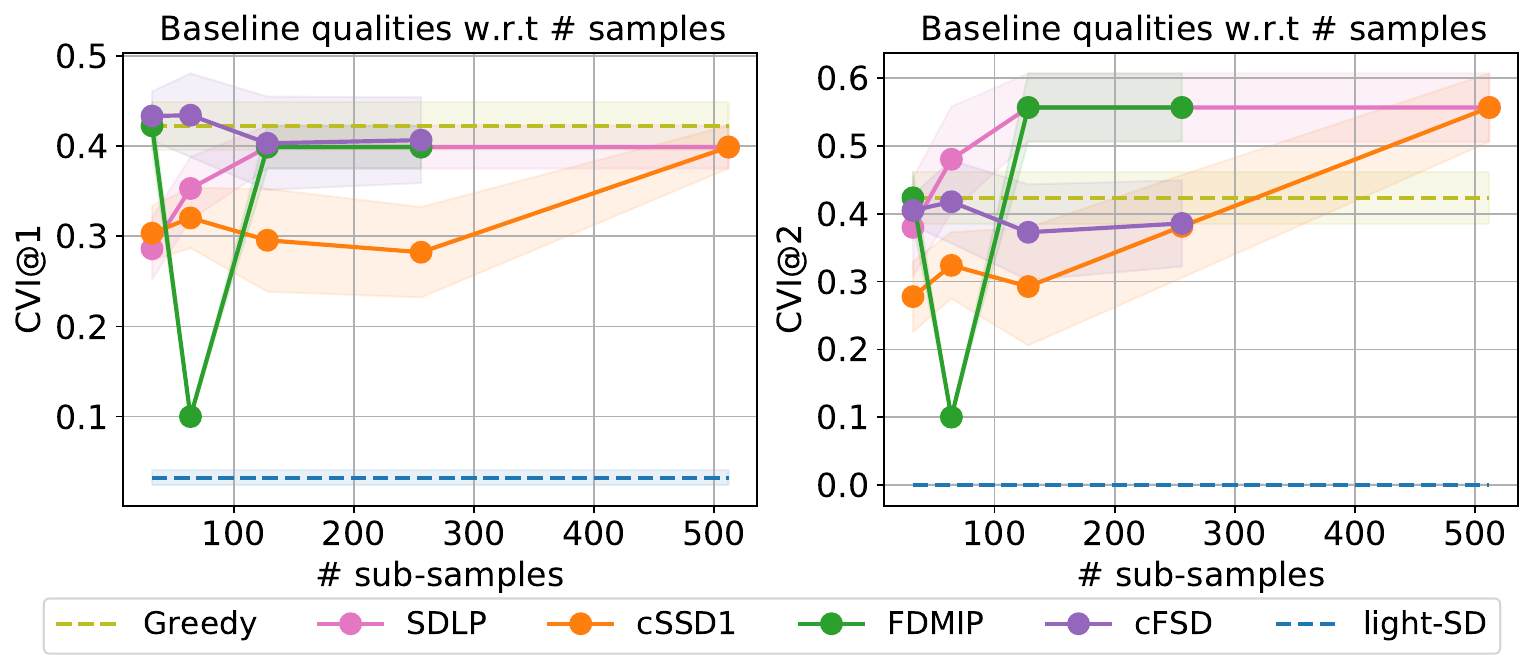}
\end{tabular}
\vspace{-5mm}
    \caption{Solution quality w.r.t.\ different number of samples for baseline methods. \label{fig:num_samples}}
\vspace{-3mm}
\end{figure*}

We evaluate the proposed \algabb using two practical problem formulations, namely portfolio optimization and the stochastic optimal transport problem as defined in \secref{sec:formulation}.

\textbf{Baselines:} We compare \algabb against a list of prominent algorithms. Most such algorithms convert SD constraints into either a LP (SDLP, cSSD1, cSSD2) or a MIP formulation (cFSD, FDMIP). 

\begin{itemize}[leftmargin=*]
\item {\bf SDLP}~\citep{dentcheva2003optimization} handles 2nd-order SD constraints by converting them into linear constraints via sampling. 
It scales poorly since the number of constraints and variables grows quadratically with respect to the sample size.
\item {\bf cSSD1}~\citep{luedtke2008new} reduces the growth of the number of constraints to a linear scale, but the number of variables still grows quadratically.
\item {\bf cSSD2}~\citep{luedtke2008new} has the same theoretical complexity as cSSD1 but may yield many fewer nonzero elements in the LP formulation.
\item {\bf FDMIP}~\citep{noyan2006relaxations} handles 1st-order SD using a MIP formulation. The number of variables and constraints both grow quadratically. MIPs are also harder to solve than LPs of the same scale.
\item {\bf cFSD}~\citep{luedtke2008new} handles 1st-order SD and improves the MIP formulation by reducing the number of constraints to a linear scale. 
\item {\bf Greedy} sets a lower-bound for the minimization objective without considering the SD constraints.
\end{itemize}
Since the 1st-order SD conditions also imply satisfaction of the 2nd-order conditions, the minimization objective achieved by FDMIP or cFSD upper-bounds that of the 2nd-order formulations. Meanwhile, given that the 2nd-order constraints are necessary for the 1st-order constraints, we leverage the 2nd-order formulations as approximations to the 1st-order SD problems, following \citep{noyan2006relaxations}. As a result, we compare all the above methods to \algabb for both 1st and 2nd-order SD constrained optimizations.

\textbf{Evaluation Metrics:} We evaluate algorithm performance by solving the stochastic optimization problem and evaluating their solution quality with respect to both the objective value and the compliance to the stochastic dominance constraints. Specifically we have:
\begin{itemize}[leftmargin=*]
\item \textit{Optimality Ratio}~({\bf obj-ratio}) is $\frac{|Objective - Objective^*|}{Objective^*}$. When the optimal solution is known
we evaluate
how closely a solution approximates the optimal value.
\item \textit{Objective}. When the optimal solution is unknown, we can still compare the objective values of different solutions (\ie, the portfolio return or the transportation cost) with higher (lower) value for maximization (minimization) problem indicating better algorithm quality.
\item \textit{Constraint Violation Index for $k$-th order}~({\bf CVI@k}). Based on the fact that $F_X^k(\eta) = \frac{1}{(k-1)!}||\max(0, \eta - X)||^{k-1}_{k-1}$ (Prop. 1 \citep{ogryczak2001consistency}), we can empirically compare 
$F_X^k(\eta)$ and $F_Y^k(\eta)$ for a given $\eta$. To evaluate how faithful a solution satisfies the $k$-th order stochastic dominance constraint for $\eta \in [a, b]$, we introduce constraint violation index, {\bf CVI@k} as
\begin{align*}
    \textstyle
    \EE_{\eta \sim U(a, b)} \sbr{ \II \cbr{ \EE \sbr{(\eta - X)_{+}^{k-1} } > \EE \sbr{(\eta - Y)_{+}^{k-1} }} }
\end{align*}
to measure the degree of constraint violation within $[a, b]$. 
\end{itemize}
For all methods we run for at most 1 hour with CPUs and compare their final solutions. \algabb can be further accelerated with P100 GPUs. For each configuration we run with 5 random seeds and report the corresponding mean and standard deviations of the evaluation metrics. Full experimental results are included in \appref{app:more_exp}.

\begin{figure*}
    \centering
\begin{tabular}{@{}c@{}c@{}}
     \includegraphics[width=0.495\textwidth]{./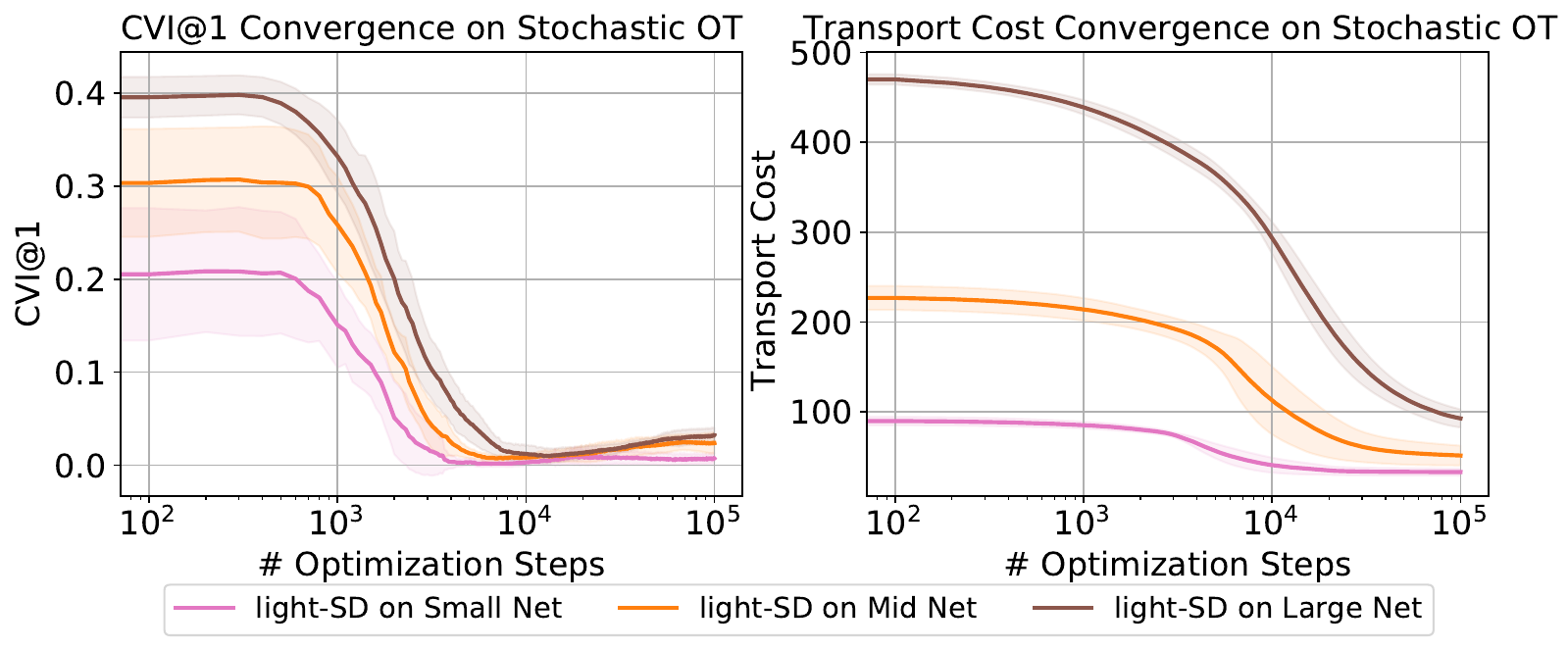} & 
     \includegraphics[width=0.495\textwidth]{./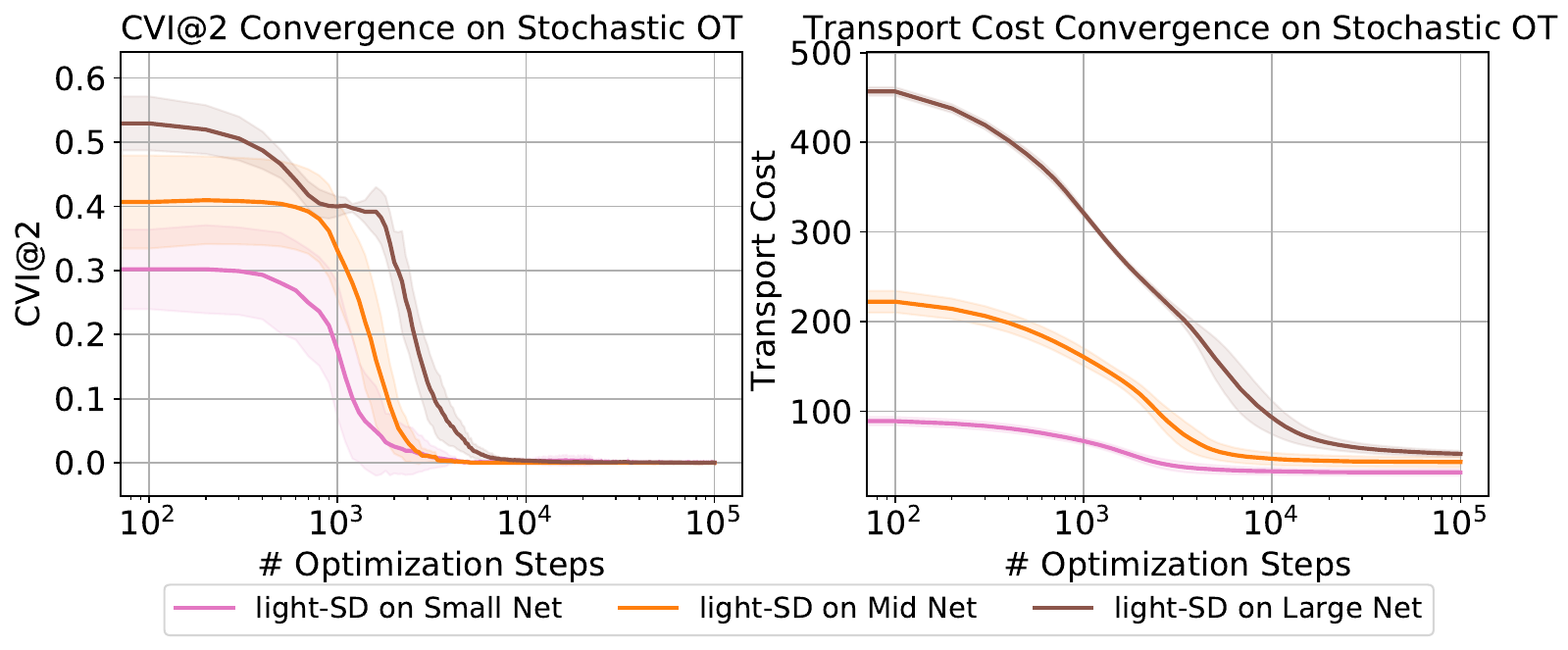}
\end{tabular}
\vspace{-3mm}
    \caption{Convergence of \algabb w.r.t CVI and transportation cost under different SD constraints.}
    \label{fig:convergence}
\end{figure*}

\subsection{Portfolio Optimization}

We consider the portfolio optimization problem as in Eq.~(\ref{eq:port_opt}) with two experiment setups.

\begin{table}[thb]
\caption{\em Obj-ratio and CVI compared to optimal solution. \label{tab:toy} }
\centering
\begin{tabular}{ccccc}
\toprule
	& \multicolumn{2}{c}{\algabb} & \multicolumn{2}{c}{2nd-order LP} \\
Constraints & obj-ratio & CVI & obj-ratio & CVI \\
\toprule
  1st Order & {\bf 0.19\%} & {\bf 5.37\%} & 3.36\% & 26.27\% \\
  2nd Order & 0.00 & 0.00 &  0.00 &  0.00 \\
\bottomrule
\end{tabular}
\end{table}
\textbf{Small-Scale Experiment:}
We first verify the correctness of \algabb using a small-scale portfolio optimization problem identical to the one used in ~\citep{dentcheva2003optimization}. In this setup, eight assets including NASDAQ, S\&P 500, U.S. long-term bonds, etc, are considered to form a portfolio whose performance is measured in terms of \textit{yearly} returns. The distributions of the yearly returns are estimated using the historical data of these eight assets over 22 years (Table 8.1 in ~\citep{dentcheva2003optimization}). 

The reference policy is an equally weighted portfolio with an expected return of 10.6\%. In this simple setting we only have 22 observations in total, so both the baseline solvers are able to achieve the optimal solution under the 2nd and 1st SD constraints, respectively. So we primarily evaluate the quality of \algabb and compare with 2nd-order LP relaxation for the 1st-order MIP constraints. Specifically under the 1st and 2nd order constraints, the optimal policy would yield 10.65\% and 11.00\% expected returns, respectively. We can see in \tabref{tab:toy} that \algabb achieves a better error ratio than the 2nd-order LP approximation for the 1st-order constraints, while being almost perfect in solving with the 2nd-order constraints.

\begin{table}[t]
    \centering
 \caption{\em \algabb running on CPUs or GPUs for 1 hour. \label{tab:cpu_vs_gpu}}
\resizebox{0.4\textwidth}{!}{%
    \begin{tabular}{ccccc}
\toprule
    & \multicolumn{2}{c}{CVI@2} & \multicolumn{2}{c}{Cost} \\
\cmidrule{2-5}
    & GPU & CPU & GPU & CPU \\
\hline
Small Network & 0.04\% & 0.05\% & 31.64 & 31.64 \\
Mid Network & 0.00\% & 0.00\% & 43.17 & 43.31 \\
Large Network & 0.05\% & 0.08\% & 52.35 & 60.53 \\
\bottomrule
    \end{tabular}
}
\end{table}

\textbf{Large-Scale Experiment}: 
For the large-scale experiment, we sample different sets of stocks listed on NASDAQ and consider their daily returns from Jan 1, 2015 to May 1, 2022 for assessment.
Concretely, we evaluate portfolios consisting of $\{20, 50, 100\}$ stocks. For each portfolio size setting, we first fit a density model $p(\xi)$ using kernel density estimation, which serves as the distribution for random variable $\xi$. We train all the methods using samples from $p(\xi)$, and evaluate against the actual daily returns. 

Since the baseline methods only work for finite samples from $p(\xi)$, we vary the number of samples within $\{32, 64, 128, 256, 512\}$ and evaluate the solution against the actual daily returns from Jan 1, 2015. For the constrained optimization, we report the trade-off between objective optimality and constraint satisfaction in~\figref{fig:err_obj_scatter}. From the top row we can see that \algabb achieves almost zero CVI for most random seeds, and relatively high portfolio returns. For some rare cases, the baseline methods yield higher returns but also a much higher CVI, due to the insufficiency of approximating constraints with a finite sample. The computational complexity of the traditional baseline methods generally limits their applicability to optimizing portfolios with more granular returns.

To provide a more intuitive understanding of how good the portfolio obtained by \algabb{} compared to the reference policy, we further report standard deviation, sharp ratio and largest drawback in \tabref{tab:stock_metrics}. Overall it indeed improves the return while reduces the risks in terms of the variance and worst case performance. Please refer to \appref{app:more_exp} for more details about the problem settings and the full results.

\subsection{Stochastic Optimal Transport}
We consider the optimal transport problem as defined in Eq.~(\ref{eq:opt_transport}), with three different scales of network structures. In the experiment, demands and supplies are synthetically generated. To make the problem more realistic, we assume $p(\xi)$ is a multi-modal distribution where each mode corresponds to a multivariate Gaussian. The mean configuration of each mode is sampled from a Poisson distribution with mean equals to 10.0 for each region and covariance a random positive-definite matrix. The supply distribution is configured similarly, but with a larger expected supply that can cover the expected total demand. 

The general experimental setting is similar to portfolio optimization, with one subtle but major difference -- here we are seeking a transport plan where the total demand attributed to each warehouse is \textit{dominated by} the supply at that warehouse. The change of the  dominance direction makes little difference for \algabb, but causes major problems for the baseline approaches. For example, the constraints of cSSD2 become quadratic instead of linear,  thus making cSSD2 nontrivial to solve. We report the trade-offs between transportation cost and CVI in \figref{fig:err_obj_scatter} on large networks with 100 regions/nodes. In almost all cases it appears that \algabb achieves a clear win over the baselines. The baselines either struggle with finding feasible solutions or suffer from high transportation costs. Our hypothesis is that the multimodality of $p(\xi)$ makes the required number of samples grows significantly relative to the single mode case, making the baselines very difficult to scale up.

\subsection{Efficiency}

\textbf{CPU or GPU:}
The above results of \algabb{} are obtained using a P100 GPU. As the LP/MIP baselines cannot leverage the advantages of GPUs, we here include extra results of running \algabb{} on the same CPU-only machines.

\algabb{} is able to finish the solving for each of the instances of each portfolio optimization settings in 30 mins. For stochastic OT the computation is more intense, and we report the quality comparison between CPU/GPU based \algabb{} in \tabref{tab:cpu_vs_gpu}. Overall running \algabb{} would take a bit longer and thus obtain slightly worse results than on GPU. But the results are still significantly better than all other baselines.

\textbf{Baseline sample efficiency:}
We plot the CVI of the baseline methods under different sample sizes (from 32 up to 512) in \figref{fig:num_samples}. A maximum of 512 samples is selected as limits modern solvers to take less than 1 hour. \algabb is also plotted as a line in \figref{fig:num_samples} as a comparison, since it is not affected by finite samples. For portfolio optimization, more samples generally lead to better CVI satisfaction in the baseline methods. For the stochastic OT problem, this observation does not hold for growing sample size, due to the difficulty of the multi-modal distribution estimation.

\textbf{Convergence of \algabb:} We visualize the convergence of CVI and the objective value with \algabb in \figref{fig:convergence}. For both problems we can see that \algabb converges in around $10^4$ steps with a batch size of 512. Moreover, the stochastic OT problem takes more steps in finding a feasible solution compared to the portfolio optimization problem. This in turn justifies that with finite samples, it is hard for the baseline approaches to find a near optimal solution.

\section{Conclusion}
We have investigated the stochastic dominance concept as a principled way to handle uncertainty comparison, exploiting ML techniques to provide an efficient new way to handle the constraints, and solving a long-standing OR problem. Specifically, by exploiting stochastic approximation and special dual parametriztion, we bypass the difficulties in the Lagrangian for optimization with stochastic dominance constraints, and achieve a simple yet efficient algorithm. The proposed~\algabb is empirically scalable and theoretically guaranteed.

\subsubsection*{Acknowledgements}
We thank four anonymous reviewers of AISTATS 2023 for their helpful comments to improve the manuscript and Sherry Yang for reviewing draft versions of this manuscript. Na Li is supported by NSF AI institute 2112085. Dale Schuurmans gratefully acknowledges support from a CIFAR Canada AI Chair, NSERC and Amii.


\begin{thebibliography}{54}
\providecommand{\natexlab}[1]{#1}
\providecommand{\url}[1]{\texttt{#1}}
\expandafter\ifx\csname urlstyle\endcsname\relax
  \providecommand{\doi}[1]{doi: #1}\else
  \providecommand{\doi}{doi: \begingroup \urlstyle{rm}\Url}\fi

\bibitem[Kochenderfer(2015)]{kochenderfer2015decision}
Mykel~J Kochenderfer.
\newblock \emph{Decision making under uncertainty: theory and application}.
\newblock MIT press, 2015.

\bibitem[Sani et~al.(2012)Sani, Lazaric, and Munos]{sani2012risk}
Amir Sani, Alessandro Lazaric, and R{\'e}mi Munos.
\newblock Risk-aversion in multi-armed bandits.
\newblock \emph{Advances in Neural Information Processing Systems}, 25, 2012.

\bibitem[Petrik and Subramanian(2012)]{petrik2012approximate}
Marek Petrik and Dharmashankar Subramanian.
\newblock An approximate solution method for large risk-averse markov decision
  processes.
\newblock \emph{arXiv preprint arXiv:1210.4901}, 2012.

\bibitem[Tamar et~al.(2013)Tamar, Di~Castro, and Mannor]{tamar2013temporal}
Aviv Tamar, Dotan Di~Castro, and Shie Mannor.
\newblock Temporal difference methods for the variance of the reward to go.
\newblock In \emph{International Conference on Machine Learning}, pages
  495--503. PMLR, 2013.

\bibitem[Tamar et~al.(2015)Tamar, Glassner, and Mannor]{tamar2015optimizing}
Aviv Tamar, Yonatan Glassner, and Shie Mannor.
\newblock Optimizing the cvar via sampling.
\newblock In \emph{Twenty-Ninth AAAI Conference on Artificial Intelligence},
  2015.

\bibitem[La and Ghavamzadeh(2013)]{la2013actor}
Prashanth La and Mohammad Ghavamzadeh.
\newblock Actor-critic algorithms for risk-sensitive mdps.
\newblock \emph{Advances in neural information processing systems}, 26, 2013.

\bibitem[Delage and Mannor(2010)]{delage2010percentile}
Erick Delage and Shie Mannor.
\newblock Percentile optimization for markov decision processes with parameter
  uncertainty.
\newblock \emph{Operations research}, 58\penalty0 (1):\penalty0 203--213, 2010.

\bibitem[Machina and Viscusi(2013)]{machina2013handbook}
Mark Machina and Kip Viscusi.
\newblock \emph{Handbook of the Economics of Risk and Uncertainty}.
\newblock Newnes, 2013.

\bibitem[Mann and Whitney(1947)]{mann1947test}
Henry~B Mann and Donald~R Whitney.
\newblock On a test of whether one of two random variables is stochastically
  larger than the other.
\newblock \emph{The Annals of Mathematical Statistics}, pages 50--60, 1947.

\bibitem[Lehmann(1955)]{lehmann1955ordered}
Erich~Leo Lehmann.
\newblock Ordered families of distributions.
\newblock \emph{The Annals of Mathematical Statistics}, pages 399--419, 1955.

\bibitem[Quirk and Saposnik(1962)]{quirk1962admissibility}
James~P Quirk and Rubin Saposnik.
\newblock Admissibility and measurable utility functions.
\newblock \emph{The Review of Economic Studies}, 29\penalty0 (2):\penalty0
  140--146, 1962.

\bibitem[Rothschild and Stiglitz(1970)]{rothschild1970increasing}
Michael Rothschild and Joseph~E Stiglitz.
\newblock Increasing risk: I. a definition.
\newblock \emph{Journal of Economic theory}, 2\penalty0 (3):\penalty0 225--243,
  1970.

\bibitem[Rothschild and Stiglitz(1971)]{rothschild1971increasing}
Michael Rothschild and Joseph~E Stiglitz.
\newblock Increasing risk ii: Its economic consequences.
\newblock \emph{Journal of Economic theory}, 3\penalty0 (1):\penalty0 66--84,
  1971.

\bibitem[Dentcheva and Ruszczy{\'n}ski(2006)]{dentcheva2006portfolio}
Darinka Dentcheva and Andrzej Ruszczy{\'n}ski.
\newblock Portfolio optimization with stochastic dominance constraints.
\newblock \emph{Journal of Banking \& Finance}, 30\penalty0 (2):\penalty0
  433--451, 2006.

\bibitem[Nie et~al.(2012)Nie, Wu, and Homem-de Mello]{nie2012optimal}
Yu~Marco Nie, Xing Wu, and Tito Homem-de Mello.
\newblock Optimal path problems with second-order stochastic dominance
  constraints.
\newblock \emph{Networks and Spatial Economics}, 12\penalty0 (4):\penalty0
  561--587, 2012.

\bibitem[Dentcheva and Ruszczy{\'n}ski(2008)]{dentcheva2008stochastic}
Darinka Dentcheva and Andrzej Ruszczy{\'n}ski.
\newblock Stochastic dynamic optimization with discounted stochastic dominance
  constraints.
\newblock \emph{SIAM journal on control and optimization}, 47\penalty0
  (5):\penalty0 2540--2556, 2008.

\bibitem[Haskell and Jain(2013)]{haskell2013stochastic}
William~B Haskell and Rahul Jain.
\newblock Stochastic dominance-constrained markov decision processes.
\newblock \emph{SIAM Journal on Control and Optimization}, 51\penalty0
  (1):\penalty0 273--303, 2013.

\bibitem[Levy(1992)]{levy1992stochastic}
Haim Levy.
\newblock Stochastic dominance and expected utility: Survey and analysis.
\newblock \emph{Management science}, 38\penalty0 (4):\penalty0 555--593, 1992.

\bibitem[Ogryczak and Ruszczy{\'n}ski(1999)]{ogryczak1999stochastic}
W{\l}odzimierz Ogryczak and Andrzej Ruszczy{\'n}ski.
\newblock From stochastic dominance to mean-risk models: Semideviations as risk
  measures.
\newblock \emph{European journal of operational research}, 116\penalty0
  (1):\penalty0 33--50, 1999.

\bibitem[Ogryczak and Ruszczy{\'n}ski(2001)]{ogryczak2001consistency}
W{\l}odzimierz Ogryczak and Andrzej Ruszczy{\'n}ski.
\newblock On consistency of stochastic dominance and mean--semideviation
  models.
\newblock \emph{Mathematical Programming}, 89\penalty0 (2):\penalty0 217--232,
  2001.

\bibitem[Ogryczak and Ruszczy{\'n}ski(2002)]{ogryczak2002dual}
W{\l}odzimierz Ogryczak and Andrzej Ruszczy{\'n}ski.
\newblock Dual stochastic dominance and related mean-risk models.
\newblock \emph{SIAM Journal on Optimization}, 13\penalty0 (1):\penalty0
  60--78, 2002.

\bibitem[Dentcheva and Ruszczy{\'n}ski(2003)]{dentcheva2003optimization}
Darinka Dentcheva and Andrzej Ruszczy{\'n}ski.
\newblock Optimization with stochastic dominance constraints.
\newblock \emph{SIAM Journal on Optimization}, 14\penalty0 (2):\penalty0
  548--566, 2003.

\bibitem[Dentcheva and
  Ruszczy{\'n}ski(2004{\natexlab{a}})]{dentcheva2004optimality}
Darinka Dentcheva and Andrzej Ruszczy{\'n}ski.
\newblock Optimality and duality theory for stochastic optimization problems
  with nonlinear dominance constraints.
\newblock \emph{Mathematical Programming}, 99\penalty0 (2):\penalty0 329--350,
  2004{\natexlab{a}}.

\bibitem[Noyan et~al.(2006)Noyan, Rudolf, and
  Ruszczy{\'n}ski]{noyan2006relaxations}
Nilay Noyan, Gabor Rudolf, and Andrzej Ruszczy{\'n}ski.
\newblock Relaxations of linear programming problems with first order
  stochastic dominance constraints.
\newblock \emph{Operations Research Letters}, 34\penalty0 (6):\penalty0
  653--659, 2006.

\bibitem[Rudolf and Ruszczy{\'n}ski(2008)]{rudolf2008optimization}
G{\'a}bor Rudolf and Andrzej Ruszczy{\'n}ski.
\newblock Optimization problems with second order stochastic dominance
  constraints: duality, compact formulations, and cut generation methods.
\newblock \emph{SIAM Journal on Optimization}, 19\penalty0 (3):\penalty0
  1326--1343, 2008.

\bibitem[Luedtke(2008)]{luedtke2008new}
James Luedtke.
\newblock New formulations for optimization under stochastic dominance
  constraints.
\newblock \emph{SIAM Journal on Optimization}, 19\penalty0 (3):\penalty0
  1433--1450, 2008.

\bibitem[Strassen(1965)]{strassen1965existence}
Volker Strassen.
\newblock The existence of probability measures with given marginals.
\newblock \emph{The Annals of Mathematical Statistics}, 36\penalty0
  (2):\penalty0 423--439, 1965.

\bibitem[Post(2003)]{post2003empirical}
Thierry Post.
\newblock Empirical tests for stochastic dominance efficiency.
\newblock \emph{The Journal of Finance}, 58\penalty0 (5):\penalty0 1905--1931,
  2003.

\bibitem[Armbruster and Delage(2015)]{armbruster2015decision}
Benjamin Armbruster and Erick Delage.
\newblock Decision making under uncertainty when preference information is
  incomplete.
\newblock \emph{Management science}, 61\penalty0 (1):\penalty0 111--128, 2015.

\bibitem[Hu et~al.(2012)Hu, Homem-de Mello, and Mehrotra]{hu2012sample}
Jian Hu, Tito Homem-de Mello, and Sanjay Mehrotra.
\newblock Sample average approximation of stochastic dominance constrained
  programs.
\newblock \emph{Mathematical programming}, 133\penalty0 (1):\penalty0 171--201,
  2012.

\bibitem[Haskell and Toriello(2018)]{haskell2018modeling}
William~B Haskell and Alejandro Toriello.
\newblock Modeling stochastic dominance as infinite-dimensional constraint
  systems via the strassen theorem.
\newblock \emph{Journal of Optimization Theory and Applications}, 178\penalty0
  (3):\penalty0 726--742, 2018.

\bibitem[Dentcheva and Ruszczy{\'n}ski(2009)]{dentcheva2009optimization}
Darinka Dentcheva and Andrzej Ruszczy{\'n}ski.
\newblock Optimization with multivariate stochastic dominance constraints.
\newblock \emph{Mathematical Programming}, 117\penalty0 (1):\penalty0 111--127,
  2009.

\bibitem[Dentcheva and Ruszczy{\'n}ski(2004{\natexlab{b}})]{dentcheva2004semi}
Darinka Dentcheva and Andrzej Ruszczy{\'n}ski.
\newblock Semi-infinite probabilistic optimization: first-order stochastic
  dominance constrain.
\newblock \emph{Optimization}, 53\penalty0 (5-6):\penalty0 583--601,
  2004{\natexlab{b}}.

\bibitem[Von~Neumann and Morgenstern(2004)]{von2004theory}
J.~Von~Neumann and O.~Morgenstern.
\newblock \emph{Theory of Games and Economic Behavior}.
\newblock Princeton classic editions. Princeton University Press, 2004.
\newblock ISBN 9780691119939.

\bibitem[Sill(1997)]{sill1997monotonic}
Joseph Sill.
\newblock Monotonic networks.
\newblock \emph{Advances in neural information processing systems}, 10, 1997.

\bibitem[Dugas et~al.(2009)Dugas, Bengio, B{\'e}lisle, Nadeau, and
  Garcia]{dugas2009incorporating}
Charles Dugas, Yoshua Bengio, Fran{\c{c}}ois B{\'e}lisle, Claude Nadeau, and
  Ren{\'e} Garcia.
\newblock Incorporating functional knowledge in neural networks.
\newblock \emph{Journal of Machine Learning Research}, 10\penalty0 (6), 2009.

\bibitem[Wehenkel and Louppe(2019)]{wehenkel2019unconstrained}
Antoine Wehenkel and Gilles Louppe.
\newblock Unconstrained monotonic neural networks.
\newblock \emph{Advances in neural information processing systems}, 32, 2019.

\bibitem[Barron(1993)]{barron1993universal}
Andrew~R Barron.
\newblock Universal approximation bounds for superpositions of a sigmoidal
  function.
\newblock \emph{IEEE Transactions on Information theory}, 39\penalty0
  (3):\penalty0 930--945, 1993.

\bibitem[Poljak and Tsypkin(1973)]{poljak1973pseudogradient}
BT~Poljak and Ya~Z Tsypkin.
\newblock Pseudogradient adaptation and training algorithms.
\newblock \emph{Automation and remote control}, 34:\penalty0 45--67, 1973.

\bibitem[Yang et~al.(2019)Yang, Wang, Kiyavash, and He]{yang2019learning}
Yingxiang Yang, Haoxiang Wang, Negar Kiyavash, and Niao He.
\newblock Learning positive functions with pseudo mirror descent.
\newblock \emph{Advances in Neural Information Processing Systems}, 32, 2019.

\bibitem[Nouiehed et~al.(2019)Nouiehed, Sanjabi, Huang, Lee, and
  Razaviyayn]{nouiehed2019solving}
Maher Nouiehed, Maziar Sanjabi, Tianjian Huang, Jason~D Lee, and Meisam
  Razaviyayn.
\newblock Solving a class of non-convex min-max games using iterative first
  order methods.
\newblock \emph{Advances in Neural Information Processing Systems}, 32, 2019.

\bibitem[Hu et~al.(2021)Hu, Chen, and He]{hu2021bias}
Yifan Hu, Xin Chen, and Niao He.
\newblock On the bias-variance-cost tradeoff of stochastic optimization.
\newblock \emph{Advances in Neural Information Processing Systems},
  34:\penalty0 22119--22131, 2021.

\bibitem[Nesterov and Nemirovskii(1994)]{nesterov1994interior}
Yurii Nesterov and Arkadii Nemirovskii.
\newblock \emph{Interior-point polynomial algorithms in convex programming}.
\newblock SIAM, 1994.

\bibitem[Khalil et~al.(2017)Khalil, Dai, Zhang, Dilkina, and
  Song]{khalil2017learning}
Elias Khalil, Hanjun Dai, Yuyu Zhang, Bistra Dilkina, and Le~Song.
\newblock Learning combinatorial optimization algorithms over graphs.
\newblock \emph{Advances in neural information processing systems}, 30, 2017.

\bibitem[Bello et~al.(2016)Bello, Pham, Le, Norouzi, and
  Bengio]{bello2016neural}
Irwan Bello, Hieu Pham, Quoc~V Le, Mohammad Norouzi, and Samy Bengio.
\newblock Neural combinatorial optimization with reinforcement learning.
\newblock \emph{arXiv preprint arXiv:1611.09940}, 2016.

\bibitem[Karalias and Loukas(2020)]{karalias2020erdos}
Nikolaos Karalias and Andreas Loukas.
\newblock Erdos goes neural: an unsupervised learning framework for
  combinatorial optimization on graphs.
\newblock \emph{Advances in Neural Information Processing Systems},
  33:\penalty0 6659--6672, 2020.

\bibitem[Li et~al.(2018)Li, Chen, and Koltun]{li2018combinatorial}
Zhuwen Li, Qifeng Chen, and Vladlen Koltun.
\newblock Combinatorial optimization with graph convolutional networks and
  guided tree search.
\newblock \emph{Advances in neural information processing systems}, 31, 2018.

\bibitem[Dai et~al.(2021)Dai, Xue, Syed, Schuurmans, and Dai]{dai2021neural}
Hanjun Dai, Yuan Xue, Zia Syed, Dale Schuurmans, and Bo~Dai.
\newblock Neural stochastic dual dynamic programming.
\newblock \emph{arXiv preprint arXiv:2112.00874}, 2021.

\bibitem[Snoek et~al.(2012)Snoek, Larochelle, and Adams]{snoek2012practical}
Jasper Snoek, Hugo Larochelle, and Ryan~P Adams.
\newblock Practical bayesian optimization of machine learning algorithms.
\newblock \emph{Advances in neural information processing systems}, 25, 2012.

\bibitem[Papalexopoulos et~al.(2022)Papalexopoulos, Tjandraatmadja, Anderson,
  Vielma, and Belanger]{papalexopoulos2022constrained}
Theodore~P Papalexopoulos, Christian Tjandraatmadja, Ross Anderson, Juan~Pablo
  Vielma, and David Belanger.
\newblock Constrained discrete black-box optimization using mixed-integer
  programming.
\newblock In \emph{International Conference on Machine Learning}, pages
  17295--17322. PMLR, 2022.

\bibitem[Chajewska et~al.(2000)Chajewska, Koller, and
  Parr]{chajewska2000making}
Urszula Chajewska, Daphne Koller, and Ronald Parr.
\newblock Making rational decisions using adaptive utility elicitation.
\newblock In \emph{AAAI}, pages 363--369, 2000.

\bibitem[Boutilier et~al.(2006)Boutilier, Patrascu, Poupart, and
  Schuurmans]{boutilier2006constraint}
Craig Boutilier, Relu Patrascu, Pascal Poupart, and Dale Schuurmans.
\newblock Constraint-based optimization and utility elicitation using the
  minimax decision criterion.
\newblock \emph{Artificial Intelligence}, 170\penalty0 (8-9):\penalty0
  686--713, 2006.

\bibitem[Bonnans and Shapiro(2013)]{bonnans2013perturbation}
J~Fr{\'e}d{\'e}ric Bonnans and Alexander Shapiro.
\newblock \emph{Perturbation analysis of optimization problems}.
\newblock Springer Science \& Business Media, 2013.

\bibitem[Mohri et~al.(2018)Mohri, Rostamizadeh, and
  Talwalkar]{mohri2018foundations}
Mehryar Mohri, Afshin Rostamizadeh, and Ameet Talwalkar.
\newblock \emph{Foundations of machine learning}.
\newblock MIT press, 2018.

\end{thebibliography}

\newpage
\appendix

\appendix
\onecolumn

\begin{appendix}

\thispagestyle{plain}
\begin{center}
\textbf{\huge Appendix}
\end{center}

\section{Existing Stochastic Dominance Reformulations}\label{appendix:reform}

We review the existing representative reformulations of stochastic dominance from distribution function, utility function and Strassen theorem, respectively. We mainly focus on first-/second-order stochastic dominance constraints with discrete random variables as the original papers specified.

Denote $g\rbr{z, \xi} = W$, and the support for $W$ and $Y$ is $\cbr{g(z, \xi_i) = w_i}_{i=1}^M$ and $\cbr{y_k}_{k=1}^K$. Without loss of generality, we assume $\cbr{y_k}_{k=1}^K$ are ordered $y_1\le y_2\le \ldots \le y_K$. We denote the distributions of $W$ and $Y$ are described as 
\[
P(W = w_i) = p_i,\quad \text{and}\quad P(Y = y_k) = q_k. 
\]

\paragraph{Distribution function reformulation:} Based on the stochastic dominance condition definition through $2$nd distribution functions,~\citet{dentcheva2004optimality} reformulates
\begin{equation}
    W\succeq_2 Y \iff \exists s\in \RR_+^{MK}, \st, \cbr{\begin{matrix}\sum_{i=1}^M p_i s_{ik} \le \sum_{j=1}^K q_j \rbr{y_k - y_j}_+, \quad k =1, \ldots, K \\
    s_{ik} + w_i \ge y_k, \quad i = 1, \ldots, M; k = 1, \ldots, K
    \end{matrix}}.
\end{equation}

Similarly,~\citet{noyan2006relaxations} reformulates first order stochastic dominance as 
\begin{equation}
    W\succeq_1 Y \iff \exists \beta \in \cbr{0, 1}^{MK}, \st, \cbr{\begin{matrix} \sum_{i=1}^M p_i \beta_{ik} \le \sum_{j=1}^{k-1} q_j, \quad k = 1, \ldots, K\\ 
    w_i +C \beta_{ik}\ge y_k, \quad i = 1, \ldots, M; k = 1, \ldots, K 
    \end{matrix}},
\end{equation}
where $C$ is a sufficient large scalar. 

\paragraph{Utility function reformulation:} In~\citet{armbruster2015decision}, the property that the nondecreasing and convex function for discrete variables are piece-wise linear functions has been exploited, then, the second order stochastic dominance can be reformulated as 
\begin{equation}\label{eq:utility_sd}
     W\succeq_2 Y \iff \forall (v, u)\in \RR^{M}, (\alpha, \beta)\in \RR^{K}, \st, \cbr{\begin{matrix}
     \sum_{i}p_i(v_iw_i + u_i) \ge \sum_k q_k \alpha_k\\
     y_k v_i + u_i \ge \alpha_j, \forall i = 1, \ldots, M, k = 1, \ldots, K\\
     (\alpha_{k+1} - \alpha_k) \ge \beta_{k+1}(y_{k+1} - y_k),\\
     (\alpha_{k+1} - \alpha_k) \le \beta_{k}(y_{k+1} - y_k), \forall k = 1, \ldots, K\\
     (v,\beta)\ge 0
     \end{matrix}.
     }
\end{equation}

We emphasize that this utility function reformulation of stochastic dominance is highly related to the proposed method. However, the major difference is that we introduce neural network to parametrize the dual function space, which enables the efficient stochastic gradient calculation, while the reformulation~\eqref{eq:utility_sd} is nonparametric, relying on all the samples in the dataset.

\paragraph{Strassen theorem reformulation:} \citet{luedtke2008new} exploits the Strassen theorem, which leads to several equivalent reformulations of first and second stochastic dominance. We list the one representative for each condition below, respectively,
\begin{eqnarray}
    W\succeq_2 Y \iff \exists \pi \in \RR_+^{MK}, \st, \cbr{\begin{matrix}\sum_{k=1}^K y_j \pi_{ik} \le w_i, \sum_{k=1}^K \pi_{ik} = 1, i = 1, \ldots, M\\
    \sum_{i=1}^M p_i \pi_{ik} = q_k, k = 1, \ldots, K
    \end{matrix}}\\
    W\succeq_1 Y \iff \exists \pi \in \cbr{0, 1}^{MK}, \st, \cbr{\begin{matrix}\sum_{k=1}^K y_j \pi_{ik} \le w_i, \sum_{k=1}^K \pi_{ik} = 1, i = 1, \ldots, M\\
    \sum_{i=1}^M p_i \sum_{j=1}^{k-1}\pi_{ik} =\sum_{j=1}^{k-1} q_j, k = 2, \ldots, K
    \end{matrix}}
\end{eqnarray}

\section{Details of Proofs}\label{appendix:proofs}

In this section, for completeness, we provide the proofs for~\thmref{thm:first_u} from~\citep{dentcheva2004semi} and Lagrangian for high-order, from which~\thmref{thm:second_u} is a corollary from~\citep{dentcheva2003optimization}. The major proofs of these two theorems are largely the same, only the conditions and assumptions are different. 

In fact, the neural parametrization we proposed can be inspired by the proofs of these theorems.

\subsection{Proof for~\thmref{thm:first_u}}\label{appendix:first_u}
We first specify the assumptions with the full theorem.
\begin{assumption}\label{assumption:opt}
There is an optimal solution $z^*$ to~\eqref{eq:opt_sd}.
\end{assumption}
\begin{assumption}\label{assumption:interior}
There exists $z^0\in \Omega$, such that 
\[
\max_{\eta\in [a, b]}\cbr{F(g(z,\xi); \eta) + \int_{a}^\eta D_z\phi_z(\xi)d\xi - F(Y; \eta) }<0.
\]
\end{assumption}
\begin{assumption}\label{assumption:cont}
The reference variable $Y$ has continuous CDF.
\end{assumption}
\begin{theorem}[Theorem 1~\citep{dentcheva2004semi}]
Denote the probability density of $g(z, \xi)$ as $\phi_z(\cdot)$, which is continuously differentiable w.r.t. $z$ and its derivative $D_z\phi_z\rbr{\cdot}$ is bounded, under~\asmpref{assumption:opt}~\ref{assumption:interior}, and~\ref{assumption:cont} with $k=1$, we can represent the Lagrangian of~\eqref{eq:opt_sd} as~\eqref{eq:lagrangian_II} where $u\rbr{\cdot}\in \Ucal_1$, where $\Ucal_1$ is the set of nondecreasing and left continuous functions over $[a, b]$. 
\end{theorem}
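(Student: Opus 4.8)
The plan is to exhibit an explicit order-isomorphism between the multiplier measures appearing in \eqref{eq:lagrangian_I} and the utility functions in \eqref{eq:lagrangian_II}, and to read off from it that the admissible $u$ are exactly the nondecreasing, left-continuous functions on $[a,b]$. Concretely, to a nonnegative measure $\mu\in\textbf{rca}([a,b])$ I would associate the function $u_\mu(x):=-\mu([x,b])$, and the claim is that $\mu\mapsto u_\mu$ realizes the defining relation of $\Ucal_1$ while sweeping out precisely the target class as $\mu$ ranges over the nonnegative cone.

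First I would record the reduction to the measure form. Since the first-order constraint $F(g(z,\xi);\eta)\le F(Y;\eta)$ is a continuum of inequalities indexed by $\eta\in[a,b]$, the associated multiplier lives in the dual of $C([a,b])$, i.e.\ a regular countably additive measure by the Riesz representation theorem, and the sign constraint appropriate to an inequality is $\mu\ge 0$. Assumption~\ref{assumption:opt} (existence of a primal solution), the Slater-type condition in Assumption~\ref{assumption:interior}, and the continuity of $F(Y;\cdot)$ in Assumption~\ref{assumption:cont} together furnish the constraint qualification needed for strong duality and for the optimal multiplier to be attained in $\textbf{rca}([a,b])$; this is the standard infinite-dimensional Lagrangian machinery (cf.\ \citep{bonnans2013perturbation}). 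I expect this to be the main obstacle, since one must verify a constraint qualification in a semi-infinite setting rather than merely manipulate the integrand, and Assumption~\ref{assumption:cont} is exactly what removes the atom-related pathologies that otherwise plague first-order dominance.

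Second I would carry out the change of variables that turns the measure pairing into a utility expectation. Writing $F(X;\eta)=\EE_X[\II(X\le\eta)]$ and invoking Fubini's theorem (justified by nonnegativity of the integrand and finiteness of $\mu$) gives
\begin{equation*}
\int_a^b F(X;\eta)\,d\mu(\eta)=\EE_X\!\left[\int_a^b \II(X\le\eta)\,d\mu(\eta)\right]=\EE_X[\mu([X,b])]=-\EE[u_\mu(X)],
\end{equation*}
which is exactly the relation defining $\Ucal_1$ and which converts \eqref{eq:lagrangian_I} into \eqref{eq:lagrangian_II}.

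Finally I would characterize the image of the map. For $\mu\ge 0$ the function $x\mapsto\mu([x,b])=\mu([a,b])-\mu([a,x))$ is nonincreasing (the interval shrinks) and left-continuous (since $x\mapsto\mu([a,x))$ is left-continuous for half-open intervals by continuity from below), so $u_\mu$ is nondecreasing and left-continuous. Conversely, every nondecreasing left-continuous $u$ on $[a,b]$ is, up to an additive constant, of this form: its Lebesgue–Stieltjes measure $\nu$ satisfies $u(x)=u(a)+\nu([a,x))$, and taking $\mu=\nu$ yields $u_\mu=u$ modulo a constant. Because the objective in \eqref{eq:lagrangian_II} depends on $u$ only through $\EE[u(g(z,\xi))]-\EE[u(Y)]$, which is invariant under adding constants to $u$, this constant is immaterial and the correspondence is onto the full class. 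This identifies $\Ucal_1$ with the nondecreasing, left-continuous functions on $[a,b]$ and closes the argument; the higher-order statement of \thmref{thm:second_u} follows the same template with $F$ replaced by $F_k$ and the single integration step replaced by $k$ nested ones, which is also the structure motivating the parametrization \eqref{eq:neural_highorder}.
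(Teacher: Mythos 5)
Your proposal follows essentially the same route as the paper's proof: the same invocation of the KKT/duality machinery under Assumptions~\ref{assumption:opt}--\ref{assumption:cont}, the same Fubini computation converting $\int_a^b F(X;\eta)\,d\mu(\eta)$ into $-\EE[u(X)]$, and the identical correspondence $u(\cdot) = -\mu([\cdot,b])$. The only addition is that you also verify surjectivity of this map onto the nondecreasing, left-continuous functions modulo additive constants (via the Lebesgue--Stieltjes measure of $u$), a converse direction the paper leaves implicit; this is a completion of the same argument rather than a different approach.
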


\begin{proof}
Under the Assumptions, the KKT optimality condition is that there exists a \emph{non-negative} $\mu^*\in \textbf{rca}([a, b])$ and $\mu^*\neq 0$, such that 
\[
\inner{D_z\Lambda(z^*, \mu^*)}{x-z}= 0,\,\forall x\in \Omega, \quad \int_{a}^b (F(g(z, \xi); \eta) - F(Y; \eta))d\mu(\eta) = 0. 
\]

We extend the measure $\mu$ to the whole real line by setting $0$ outside of $[a, b]$, then, we have
\begin{eqnarray*}
    \int_{a}^b F(Y; \eta)d\mu(\eta) &=& \int_{-\infty}^b P(Y\le \eta)d\mu(\eta) = \int_{-\infty}^b \int_{-\infty}^\eta dP_Y(y)d\mu(\eta)\\
    &=& \int_{-\infty}^b\int_{y}^b d\mu(\eta) dP_Y(y)  = \int_{-\infty}^b\mu([y, b]) dP_Y(y) 
    = \EE\sbr{\mu([y,b])}, 
\end{eqnarray*}
where the second line of the equation comes from Fubini's theorem.

Define $u^*(\cdot) = -\mu^*([\cdot, b])$, due to the non-negativity of $\mu^*$, the $u^*\rbr{\cdot}$ in $U_1$. Therefore, we conclude that $\Lambda(z, \mu) = f(z) - \int_a^b\sbr{F_1(g(z, \xi); \eta) - F_1\rbr{Y; \eta}}d\mu(\eta)$  can be reformulated as $L\rbr{z, u}$ in~\eqref{eq:lagrangian_II} without loss of optimality. 

\end{proof}

\paragraph{Remark (Surrogate with the augmented probability measure dual for $\Ucal_1$):} 
For each positive $\mu\rbr{\eta}\in \textbf{rca}([a, b])$, we can define the normalized $\tilde\mu(\eta)=\frac{\mu\rbr{\eta}}{\int\mu\rbr{\eta}d\eta}\in \Delta([a, b])$, which is a probability distribution. We augment the probability measure space with zero measure, denoted as $\widetilde\Delta\rbr{[a, b]}\defeq \Delta([a, b])\cup 0$. It is straightforward to check that if the non-negative $\mu^*\rbr{\eta}$ satisfies the KKT conditions, we can find $\tilde\mu^*\rbr{\eta} = \begin{cases}
\frac{\mu^*(\eta)}{\int \mu^*\rbr{\eta}d\eta}, \quad &{\mu^*\rbr{\eta}\neq 0}\\
0,\quad &{\mu^*\rbr{\eta} = 0}
\end{cases} $ such that $\tilde\mu^*\rbr{\eta} \in \widetilde\Delta\rbr{[a, b]}$ also satisfies KKT conditions. 
Specifically, we have the surrogate objective as 
\[
\min_{\tilde\mu(\eta)\in \Delta([a, b])\cup 0}\Lambda(z, \tilde\mu) = f(z) - \lambda\int_a^b \rbr{F_k(g(z, \xi); \eta) - F_k\rbr{Y; \eta}}d\mu(\eta) = 
\begin{cases}
f(z), \quad \text{if } F_1(g(z, \xi); \eta) \le F_1\rbr{Y; \eta},\quad \forall \eta\\
f(z) - \lambda C(z), \quad \text{o.t.}
\end{cases},
\]
where $ C(z)>0$ is a function of $\tilde\mu_z^*(\eta)$, indicating the level of violation of stochastic dominance constraint of $z$, and $\lambda>0$ is the weight. If $\nbr{f}_\infty \le C$, we set $\lambda \ge \frac{2C}{\epsilon}$, we always have 
\[
f(z) - \lambda C(z) \le f(z^*),\quad \forall z \,\,\st C(z)> \epsilon,
\]
which means we relax the optimization by increasing the feasible set of stochastic dominance constraints with $\epsilon$. 

We will consider this relaxed problem instead the original one as a surrogate. The suboptimality gap of the relaxed problem solution is controlled by $\lambda$. Without loss of generality, we set $\lambda =1$ for simplicity in our discussion. Practically, the $\lambda$ can be tuned.

\subsection{Proof for~\thmref{thm:second_u}}\label{appendix:second_u}

We first specify the Lagrangian for high-order dominance constraint. 
\begin{theorem}[Theorem 7.1~\cite{dentcheva2003optimization}]\label{thm:highorder_u}
    Under the assumption that $\exists \ztil\in \Omega$ such that $\inf_{\eta\in[a, b]}\rbr{F_k\rbr{Y; \eta} - F_k\rbr{g(\ztil, \xi);\eta}}>0$, for $k\ge 2$, we can represent the Lagrangian of~\eqref{eq:opt_sd} as~\eqref{eq:lagrangian_II} with $u(\cdot)\in \Ucal_k$, where 
    \[
    U_k = \cbr{u(\cdot): [a, b]\rightarrow \RR \big| u^{(k-1)}(t) = (-1)^k \phi(t)},
    \]
    where $u^{(i)}$ denotes the $i$th derivative and $\phi\rbr{\cdot}$ is a non-negative, nonincreasing, left-continuous, and bounded function. 
\end{theorem}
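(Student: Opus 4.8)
The plan is to mirror the argument used for $\thmref{thm:first_u}$, replacing the first-order distribution function by its $k$-th order analogue and tracking how the extra $k-1$ integrations reshape the dual function. The starting point is the integral representation $F_k\rbr{X;\eta} = \frac{1}{(k-1)!}\EE\sbr{(\eta-X)_+^{k-1}}$ from~\eqref{eq:dist_func}, which lets me convert the penalty term of the Lagrangian~\eqref{eq:lagrangian_I} from an integral against $\mu$ into an expectation over the random variable. Concretely, for any non-negative $\mu\in\textbf{rca}([a,b])$ supplied by the KKT conditions, I would invoke Fubini's theorem to exchange the expectation over $X$ with the $\eta$-integration, obtaining
\[
\int_a^b F_k\rbr{X;\eta}\,d\mu(\eta) = \EE\!\left[\frac{1}{(k-1)!}\int_a^b (\eta-X)_+^{k-1}\,d\mu(\eta)\right].
\]
This suggests defining the candidate dual function $u(x) := -\frac{1}{(k-1)!}\int_a^b (\eta-x)_+^{k-1}\,d\mu(\eta)$, so that $\int_a^b F_k\rbr{X;\eta}\,d\mu(\eta) = -\EE\sbr{u(X)}$ and the penalty collapses to $\EE\sbr{u(g(z,\xi))} - \EE\sbr{u(Y)}$, matching~\eqref{eq:lagrangian_II}. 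As a sanity check, setting $k=1$ recovers exactly $u(x)=-\mu([x,b])$ from the proof of $\thmref{thm:first_u}$.

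The second step is to verify that this $u$ lies in $\Ucal_k$, i.e.\ that $u^{(k-1)}(t) = (-1)^k\phi(t)$ with $\phi$ non-negative, nonincreasing, left-continuous and bounded. Differentiating under the integral sign, each derivative lowers the exponent by one and contributes a factor $-(k-1-j)$, which I would organize into the closed form $u^{(j)}(x) = \frac{(-1)^{j+1}}{(k-1-j)!}\int_a^b(\eta-x)_+^{k-1-j}\,d\mu(\eta)$ for $0\le j\le k-1$, proved by induction. At $j=k-1$ the integrand degenerates to the indicator $\one(\eta\ge x)$, giving $u^{(k-1)}(x) = (-1)^k\mu([x,b])$, whence $\phi(t)=\mu([t,b])$. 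Non-negativity and boundedness follow from $\mu\ge 0$ and $\mu([a,b])<\infty$, monotonicity from the fact that $[x,b]$ shrinks as $x$ grows, and left-continuity is precisely what fixes the closed-interval convention $(\eta-x)_+^0 = \one(\eta\ge x)$ at the base case.

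To complete the characterization I would also exhibit the reverse correspondence: given any admissible $\phi$, its nonincreasing, left-continuous structure defines a non-negative Stieltjes measure $\mu$ through $\phi(t)=\mu([t,b])$, and integrating $(-1)^k\phi$ up $k-1$ times recovers a function of the stated form, showing the map $\mu\mapsto u$ is onto $\Ucal_k$. Finally, the asserted ``without loss of optimality'' reformulation follows because the Slater-type assumption $\inf_{\eta\in[a,b]}\rbr{F_k\rbr{Y;\eta}-F_k\rbr{g(\ztil,\xi);\eta}}>0$ guarantees a nonzero multiplier and no duality gap, so that the $\max_z\min_\mu$ value of~\eqref{eq:lagrangian_I} coincides with the $\max_z\min_u$ value of~\eqref{eq:lagrangian_II}; this duality step I would cite from~\citep{dentcheva2003optimization} rather than reprove.

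The main obstacle will be the base-case degeneration at $j=k-1$: the repeated differentiation is routine for $j<k-1$, where the integrand $(\eta-x)_+^{k-1-j}$ remains continuous and dominated, but at the last step the derivative of $(\eta-x)_+$ produces an indicator, and pinning down the correct one-sided continuity of $\phi$ requires care with the Stieltjes boundary and the convention for $(\,\cdot\,)_+^0$. I expect this, together with rigorously justifying the interchange of differentiation and integration against a general $\textbf{rca}$ measure, to be where the real work lies; the remainder is bookkeeping parallel to the first-order proof.
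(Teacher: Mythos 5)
Your proposal is correct and arrives at the same characterization of $\Ucal_k$, but by a genuinely different computational route than the paper. The paper's proof works with the recursive definition $F_k\rbr{X;\eta}=\int_{-\infty}^{\eta}F_{k-1}\rbr{X;t}\,dt$: after the abstract KKT theorem supplies a non-negative multiplier $\mu^*$, it applies Fubini once, defines $u$ only \emph{implicitly} through $u^{(k-1)}(t)=(-1)^{k}\mu([t,b])$ together with the boundary conditions $u^{(i)}(b)=0$ for $i=1,\ldots,k-2$, and then performs $k-1$ Stieltjes integrations by parts to turn $\int_{-\infty}^{b}\mu([t,b])\,F_{k-1}\rbr{Y;t}\,dt$ into $-\EE\sbr{u(Y)}+C$, the constant $C$ cancelling between the two penalty terms. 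You instead invoke the closed-form identity $F_k\rbr{X;\eta}=\frac{1}{(k-1)!}\EE\sbr{(\eta-X)_+^{k-1}}$ from \eqref{eq:dist_func}, swap $\EE_X$ with $\int d\mu$ by Tonelli, and write down the dual function \emph{explicitly}, $u(x)=-\frac{1}{(k-1)!}\int_a^b(\eta-x)_+^{k-1}\,d\mu(\eta)$, checking membership in $\Ucal_k$ by differentiating $k-1$ times instead of integrating by parts. The two computations are transposes of each other --- both hinge on Fubini producing $\mu([t,b])$ and both land on $u^{(k-1)}(t)=(-1)^k\mu([t,b])$ --- but your version buys three concrete things: the floating constant $C$ never appears, since one and the same explicit $u$ serves both penalty terms; the boundary conditions $u^{(i)}(b)=0$ hold automatically for your formula rather than being imposed; and the formula exhibits $\Ucal_k$ as mixtures of the spline basis $(\eta-\cdot)_+^{k-1}$, which is exactly the order-$k$ generalization of the indicator and ReLU parametrizations the paper adopts for $\Ucal_1$ and $\Ucal_2$ in \secref{subsec:dual_param}, whereas the paper's implicit route instead motivates the iterated-integral network \eqref{eq:neural_highorder}. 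Your handling of the genuine subtleties is also sound: the left-derivative convention at the degenerate step $j=k-1$ is what matches the left-continuity of $\phi(t)=\mu([t,b])$ (continuity from above of a finite measure), and the converse Stieltjes correspondence $\phi \mapsto \mu$ gives the surjectivity needed for the reformulation to lose no optimality. Finally, your decision to cite rather than reprove the existence of the non-negative multiplier and the absence of a duality gap is not a gap relative to the paper, which likewise delegates this step to the abstract KKT theorem \citep{bonnans2013perturbation}.
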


\begin{proof}
We introduce the Lagrangian for the optimization~\eqref{eq:opt_sd} with infinite constraints as 
\[
\Lambda(z, \mu) = f(z) - \int_a^b \rbr{F_k(g(z, \xi); \eta) - F_k\rbr{Y; \eta}}d\mu(\eta),
\]
where $\mu\in \textbf{rca}([a, b])$. Under the assumptions, by the KKT condition in abstract space~\cite[Theorem 3.4]{bonnans2013perturbation}, there exists a non-negative measure $\mu^*\in \textbf{rca}([a, b])$ such that 
\[
\Lambda(z^*, \mu^*) = \max_{z\in \Omega} \Lambda(z, \mu^*), \quad \int_a^b F_k\rbr{g(z,\xi); \eta} - F_k\rbr{Y; \eta} d\mu^*\rbr{\eta} = 0.
\]

We extend the measure $\mu$ to the whole real line by setting $0$ outside of $[a, b]$, then, we have
\begin{eqnarray}
    \int_{a}^b F_k(Y; \eta) d\mu(\eta) &=& \int_{-\infty}^b \int_{-\infty}^\eta F_{k-1}(Y; t) dt d\mu(\eta)\\
    &=& \int_{-\infty}^b \int_t^b d\mu(\eta) F_{k-1}\rbr{Y; t} dt = \int_{-\infty}^b \mu([t, b]) F_{k-1}\rbr{Y; t} dt.
\end{eqnarray}
Denote $u(\cdot)\in U_k$ such that $u^{(k-1)}(t) = (-1)^k\mu([t, b])$, $u^(i)(b) = 0$, for $i=1,\ldots, k-2$, then, we can rewrite
\begin{eqnarray*}
    \int_a^b F_k(Y; \eta)d\mu(\eta) = (-1)^k\int_{-\infty}^b F_{k-1}(Y; t)du^{(k-1)}(t).
\end{eqnarray*}
We further apply integration by parts $k-1$ times, 
\begin{equation*}
    (-1)^k\int_{-\infty}^b F_{k-1}(Y; t)du^{(k-1)}(t) = -\int_{-\infty}^b u(t)dF(Y; t) = -\EE\sbr{u(Y)} + C,
\end{equation*}

Therefore, we conclude the Lagrangian can be reformulated as $L(z, u)$ in~\eqref{eq:lagrangian_II} with $u_k\in \Ucal_k$ without loss of optimality. 
\end{proof}

\begin{proof}[Proof of~\thmref{thm:second_u}]
\thmref{thm:second_u} can be directly obtained from~\thmref{thm:highorder_u} with $k=2$. 
\end{proof}

\paragraph{Remark (Surrogate with the augmented probability measure dual for $\Ucal_2$):} 
Similarly, we also define the normalized $\tilde\mu(\eta)\in \Delta([a, b])$, which is a probability distribution, for each positive $\mu\in \textbf{rca}([a, b])$, and keep $\tilde\mu\rbr{\eta} = 0$ if $\mu\rbr{\eta} = 0$ and the KKT conditions still preserved for the corresponding $\tilde\mu \in \widetilde\Delta\rbr{[a, b]}$. 
We consider 
\[
\min_{\tilde\mu(\eta)\in \Delta([a, b])\cup 0}\Lambda(z, \tilde\mu) = 
\begin{cases}
f(z), \quad \text{if } F_2(g(z, \xi); \eta) \le F_2\rbr{Y; \eta},\quad \forall \eta\\
f(z) - C(z), \quad \text{o.t.}
\end{cases},
\]
where $ C(z)>0$ is a function of $\tilde\mu_z^*(\eta)$, indicating the level of violation of stochastic dominance constraint of $z$, and $\lambda>0$ is the weight. Similar to the relaxation of the first-order stochastic dominance constraint, we also obtain an approximate solution whose suboptimality gap is controlled by $\lambda$. Without loss of generality, we set $\lambda =1$ for simplicity in our discussion. Practically, the $\lambda$ can be tuned.

\paragraph{Remark (High-order dual parametrization):} Since $\mu^*\ge 0$, the $(k-1)$st derivative of $u$ is monotone. We can exploit the proposed monotonic neural network in $\Ucal_1$ for modeling $u^{k-1}$, therefore, we have
\[
u(x) = \int_a^x\ldots \int_a^{x_{k-2}} \rbr{\int_a^{x_{k-1}} \phi_u(t)dt  + \beta}d x_{k-2}\ldots dx_{1}.
\]
As we discussed in~\secref{subsubsec:high-order}, one can exploit \texttt{relu} or \texttt{softplus} to obtain a non-negative neural network for $\phi(t)$ parametrization. We emphasize that the obtain non-negative neural network is bounded for learnability, but implicitly introduces approximation error comparing to the unbounded dual space.  

\paragraph{Remark (Trade-off in sample complexity and approximation error):}
Based on our discussion for the parametrization of $\Ucal_1$, $\Ucal_2$ and $\Ucal_k$, the scale of the dual function is either controlled by $\lambda$ or the neural network parametrization, which balances the tradeoff between learnability and approximation error. Specifically, with bounded dual function parametrization, we can achieve efficient sample complexity for some approximation solution; while with unbounded dual function parametrization, the approximation error becomes zero, but the problem becomes unlearnable, \ie, the sample complexity is infinite. We can always tune $\lambda$ in practice to achieve the delicate balance. 

\subsection{Proof for~\thmref{thm:global_opt}}\label{appendix:global_opt}

\begin{proof} We prove the claims sequentially. 

\paragraph{Concavity.} 
The concavity of $L(z, u)$ w.r.t. $z$ is straightforward by applying the composition rule. Specifically, $u\in \Ucal_2$ is always nondecreasing and concave, therefore, $u(g(z, \xi))$ is concave w.r.t. $z$. Obviously, $L(z, u) = f(z) + \EE\sbr{u(g(z, \xi))} - \EE\sbr{u(y)}$ is concave.

\paragraph{Global Convergence.} 

Recall the notations, 
\begin{align*}
    L\rbr{z, u} = & f\rbr{z} + \EE_\xi\sbr{u\rbr{g(z, \xi)}} - \EE_y\sbr{u\rbr{y}}, \\
    \Lhat\rbr{z,u} = & f(z) + \frac{1}{N}\sum_{i=1}^N \rbr{u\rbr{g(z, \xi_i)}) - u\rbr{y_i}}, 
\end{align*}
for coherent with literature, we define the 
\[ 
\ell\rbr{z, u} = -L\rbr{z, u}, \quad\text{and}\quad \hat\ell\rbr{z, u} = -\Lhat\rbr{z, u}.
\]
Then, the problem is equivalently considering 
\begin{eqnarray*}
    \min_{z}\max_{u\in \Ucal_2} \ell\rbr{z, u}, \quad \text{and}\quad \min_z\max_{u\in\Ucal_2} \hat\ell\rbr{z, u}. 
\end{eqnarray*}

We denote
\[
\ell(z) = \max_{u\in \Ucal_2} \ell(z, u), \,\,\text{with}\,\, z^* = \argmin_z \ell(z),\, u^* = \argmax_{u\in\Ucal_2} \ell\rbr{z^*, u}\,\\
\]
and 
\[
\hat\ell\rbr{z} = \max_{u\in \Ucal_2} \hat\ell\rbr{z, u} \,\,\text{with}\,\, \uhat_z = \argmax_{u\in \Ucal_2} \hat\ell\rbr{z, u}. 
\]
We also denote $A_t = \frac{1}{2}\nbr{z_t - z^*}_2^2$, and $a_t = \EE\sbr{A_t}$, then we have the recursion as
\begin{eqnarray}\label{eq:intermediate}
    A_{t+1}  &= & \frac{1}{2}\nbr{z_{t+1} - z^*}_2^2  = \frac{1}{2}\nbr{z_t - \gamma_t\nabla_z\hat\ell\rbr{z_t} - z^*}_2^2\nonumber \\
    &=& A_t + \frac{1}{2}\gamma_t^2\nbr{\nabla_z\hat\ell\rbr{z_t}}_2^2 - \gamma_t \nabla_z\hat\ell\rbr{z_t}^\top \rbr{z_t - z^*}. 
\end{eqnarray}

 By convexity of $\hat\ell\rbr{z}$, we have
\begin{equation}
    -\nabla_z\hat\ell\rbr{z_t}^\top \rbr{z_t - z^*} \le \hat\ell\rbr{z_t} - \hat\ell\rbr{z^*}.  
\end{equation}
Combining with \eqref{eq:intermediate}, this implies that 
\begin{equation}\label{eq:intermediate_III}
     \EE\sbr{\ell\rbr{z_t} - \ell\rbr{z^*}} \le \frac{a_t - a_{t+1}}{\gamma_t} + \frac{\gamma_t}{2}\EE\sbr{\nbr{\nabla_z\hat\ell\rbr{z_t}}_2^2} + \EE\sbr{\hat\ell\rbr{z_t} - \ell\rbr{z_t}} + \EE\sbr{\ell\rbr{z^*}- \hat\ell\rbr{z^*}}.
\end{equation}
Note that for any $z$, $\EE\sbr{\ell\rbr{z^*}- \hat\ell\rbr{z^*}}\leq \epsilon_{stat}(z):=\EE\sbr{\sup_{u\in \Ucal_2}\abr{\hat\ell\rbr{z, u} - \ell\rbr{z, u}}}$.
Let $\epsilon_{stat}:=\sup_{z}\epsilon_{stat}(z)$. 

By telescoping the sum and invoking convexity of $\ell(z)$, we further have 
\begin{align}
    \EE\sbr{\ell\rbr{\zbar_T} - \ell\rbr{z^*}} \le& \frac{1}{T}\sum_{i=1}^T\EE\sbr{\ell\rbr{z_t} - \ell\rbr{z^*} } \\
    \le& \frac{1}{T}\sum_{i=1}^T\sbr{\frac{\gamma_t}{2}\EE\sbr{\nbr{\nabla_z\hat\ell\rbr{z_t}}_2^2} + 2\epsilon_{stat}} + \frac{1}{T}\sum_{i=1}^T a_t\rbr{\frac{1}{\gamma_t} - \frac{1}{\gamma_{t+1}}} + \frac{a_1}{T\gamma_1}.  
\end{align}
where $\zbar_T = \frac{1}{T}\sum_{i=1}^T z_t$. 
Setting stepsize $\gamma_t = \Ocal\rbr{\frac{1}{\sqrt{T}}}$, and under the assumption that $\nbr{\nabla_z\hat\ell\rbr{z}}_2\le C_f + C_g $, we achieve the convergence rate, 
\begin{align}\label{eq:intermediate_result}
    \EE\sbr{\ell\rbr{\zbar_T} - \ell\rbr{z^*}} 
    \le \frac{1}{T}\sum_{i=1}^T\rbr{2\epsilon_{stat} } + \frac{a_1}{\sqrt{T}} + \frac{C_f+ C_g}{\sqrt{2T}}.  
\end{align}

The statistical error can be bound in standard way. We first calculate the Rademacher complexity of $\Ucal_2$, which is formed as the convex combination of ReLU functions, \ie, \[
    \Ucal_2 = \cbr{u(x) = \EE_{\mu\rbr{\eta}}\sbr{\rbr{\eta- x}_+}, \mu\rbr{\eta}\in \Delta([a, b])}.
\]
The empirical Rademacher complexity of $\Ucal_2$ with $N$ samples, denoted as $\Rfrab_N\rbr{\Ucal_2}$, is equivalent to the that of ReLU~\citep[Lemma 7.4]{mohri2018foundations}, \ie, 
\[
\Rfrab_N\rbr{\Ucal_2} = \frac{4\rbr{\abr{a}+\abr{b}}}{\sqrt{N}},
\]
which immediately implies that with probability $1-\delta$~\citep[Theorem 3.3]{mohri2018foundations}, 
\[
\epsilon_{stat} = \Ocal\rbr{{\frac{\rbr{\abr{a}+\abr{b}} + \sqrt{\log\rbr{1/\delta}}}{\sqrt{N}}}}. 
\]

Combining the $\epsilon_{stat}$ with~\eqref{eq:intermediate_result}, we conclude the proof that with probability $1-\delta$, we have
\begin{equation}
    \EE\sbr{\ell\rbr{\zbar_T} - \ell\rbr{z^*}} = \Ocal\rbr{{\frac{\rbr{\abr{a}+\abr{b}} + \sqrt{\log\rbr{1/\delta}}}{\sqrt{N}}} + \frac{1}{\sqrt{T}}}. 
\end{equation}

\end{proof}

\section{More experimental details}
\label{app:more_exp}

\subsection{Large scale portfolio optimization setup}

We collect the daily stock prices of over 1,500 stocks listed on NASDAQ during Jan 1, 2015 - May 1, 2022. Then we calculate the daily return (\ie, the relative gain/loss of each day compared to the day before) of each stock, and compute the mean/standard deviations of the returns per stock. We select these stocks in a way that no stock would dominate the other (\ie, no stock would have higher mean return and lower standard deviation than other stocks), so as to avoid trivial solutions. 

In reality we would only have a estimated distribution of future returns, so to evaluate the quality of different policies using historical replay, we first fit a generative distribution over the returns of selected stocks, then use the actual daily return as the evaluation set to report the mean return and violation index (\ie, CVI) of stochastic dominance. For simplicity we use KDE with factorized Gaussian distribution assumption, where each dimension has a standard deviation of 0.01 (given that the mean return is around 0.1\%).

\subsection{Full experimental results}

\begin{figure*}
    \centering
\begin{tabular}{cc}
\rotatebox[origin=l]{90}{\quad Portfolio Optimization} & 
\includegraphics[width=0.9\textwidth]{./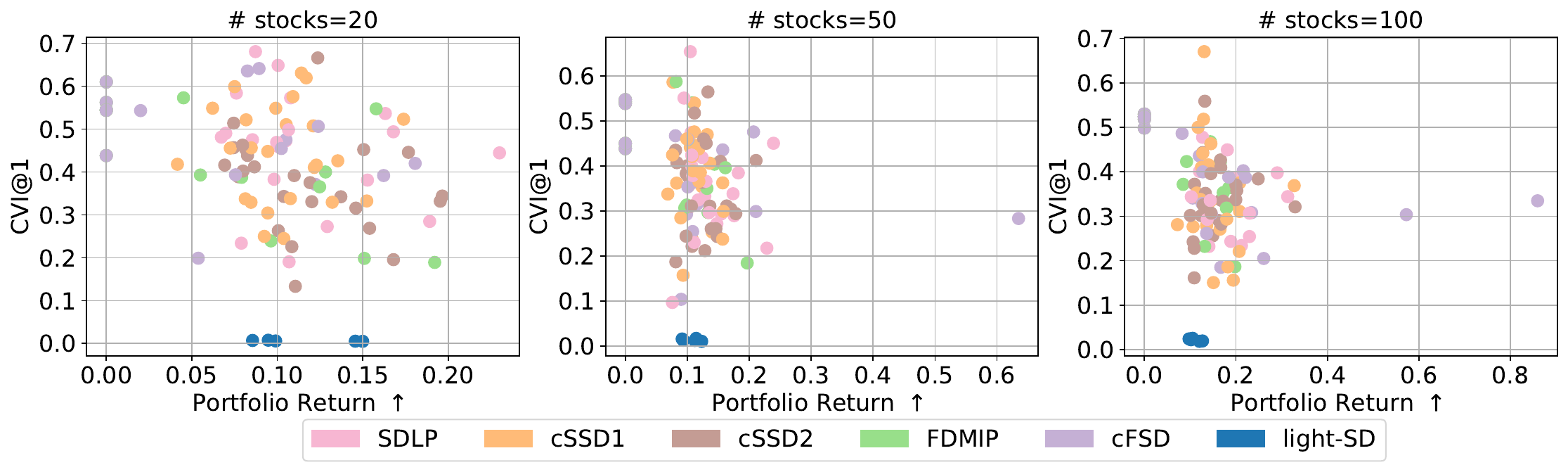} \\
\rotatebox[origin=l]{90}{\quad\quad\quad~ Stochastic OT}   &
\includegraphics[width=0.9\textwidth]{./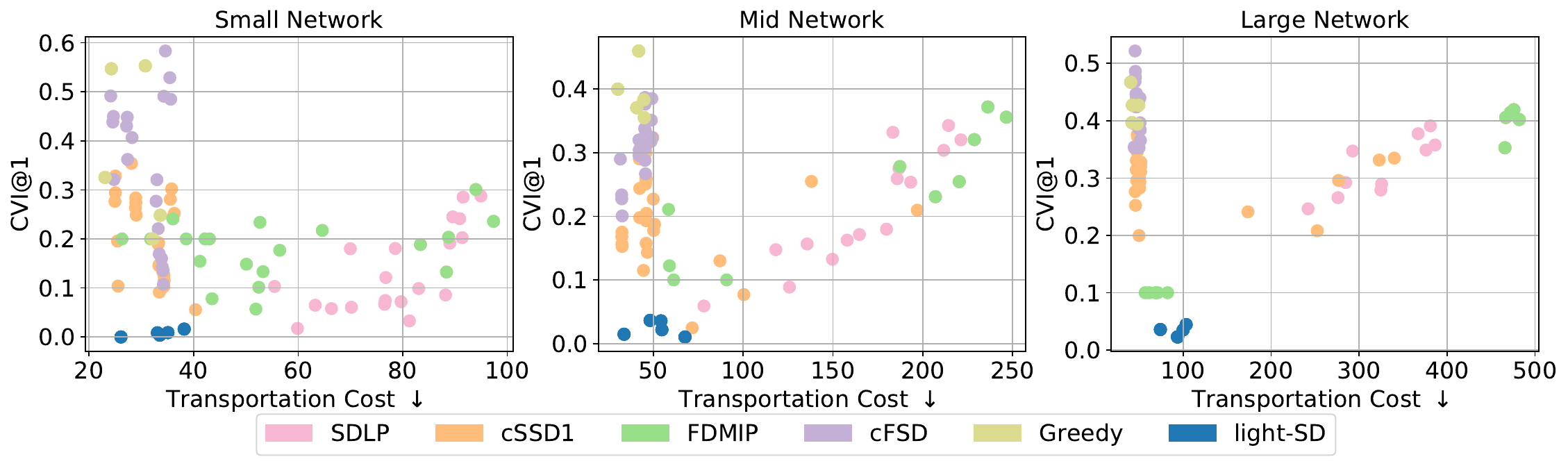}
\end{tabular}
\caption{Each dot in the above figure represents the objective and CVI@1 (for first-order stochastic dominance constraints) of corresponding solution obtained by different methods over multiple random seeds. Generally \algabb achieves better objective value (\ie, higher return or lower cost) and lower CVI compared to alternative methods. \label{fig:err_obj_scatter_1st}}
\end{figure*}

\begin{figure*}
    \centering
\begin{tabular}{cc}
\rotatebox[origin=l]{90}{\quad\quad\quad \small Portfolio Optimization} & 
\includegraphics[width=0.8\textwidth]{./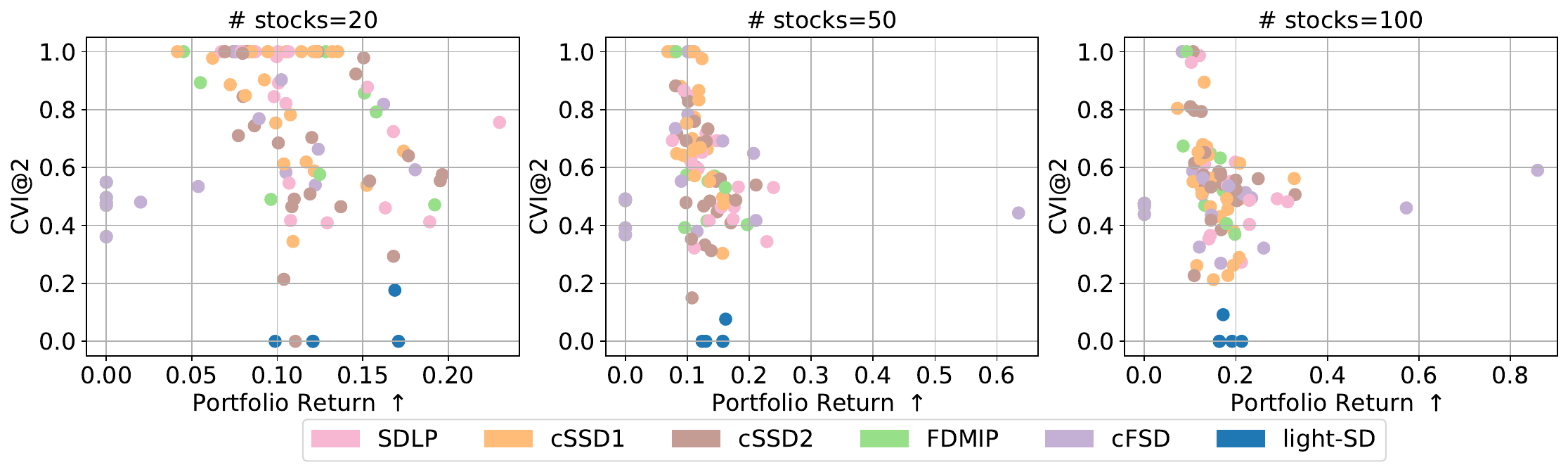} \\
\rotatebox[origin=l]{90}{\quad\quad\quad~\small Stochastic OT}   &
\includegraphics[width=0.8\textwidth]{./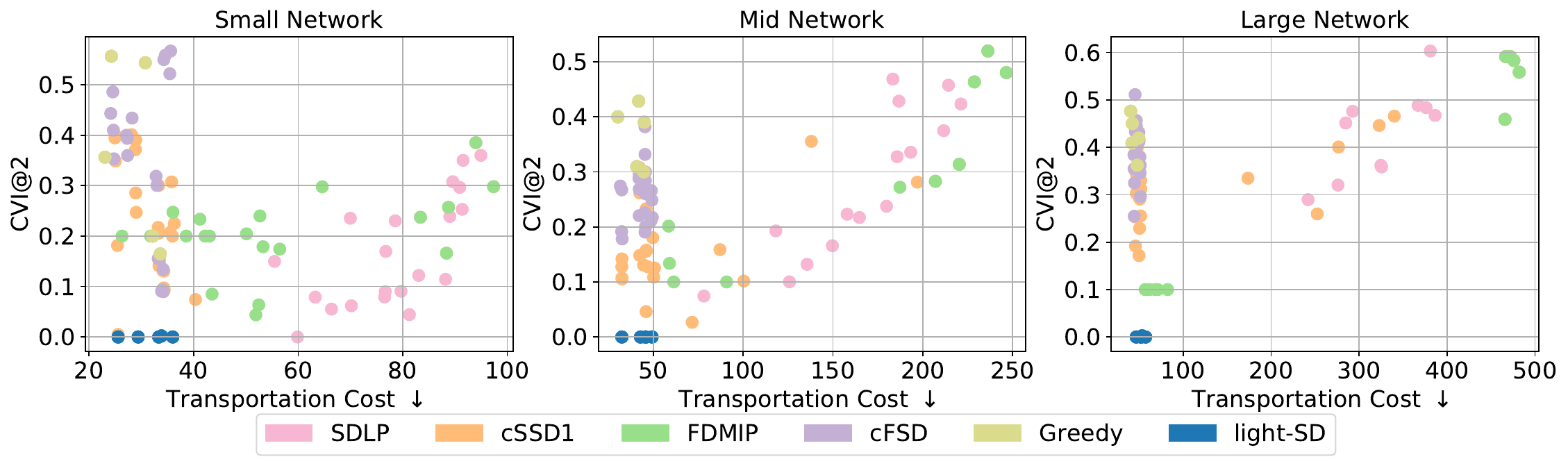}
\end{tabular}
\caption{Each dot in the above figure represents the objective and CVI@2 of corresponding solution obtained by different methods over multiple random seeds. Generally \algabb achieves better objective value (\ie, higher return or lower cost) and lower CVI compared to alternative methods. \label{fig:err_obj_scatter_2nd}}
\end{figure*}

In the main paper we present the experimental results for the largest problem configurations. Here for the completeness, we include the results of both 1st and 2nd SD constrained optimizations for all the problem scales.

\paragraph{Objective / CVI trade-offs}
\figref{fig:err_obj_scatter_1st} and \figref{fig:err_obj_scatter_2nd} shows scatter plot of objective / CVI trade-offs under 1st/2nd order stochastic dominance constraints, respectively. We can see \algabb achieved almost perfect CVI, while also retaining low transportation cost (which is slightly higher than the lowerbound obtained by greedy-based approaches, where the greedy one totally ignores the SD constraints).

\paragraph{Sample complexity for baseline methods}
\figref{fig:num_samples_1st} and ~\figref{fig:num_samples_2nd} shows that the LP/MIP based approaches are not sample efficient enough. Compared to the second-order cases, first-order stochastic dominance is even harder (especially given that the LP based approach are just solving relaxation of the first-order SD constraints).

\begin{figure*}
    \centering
\begin{tabular}{cc}
\rotatebox[origin=l]{90}{\quad Portfolio Optimization} & \includegraphics[width=0.9\textwidth]{./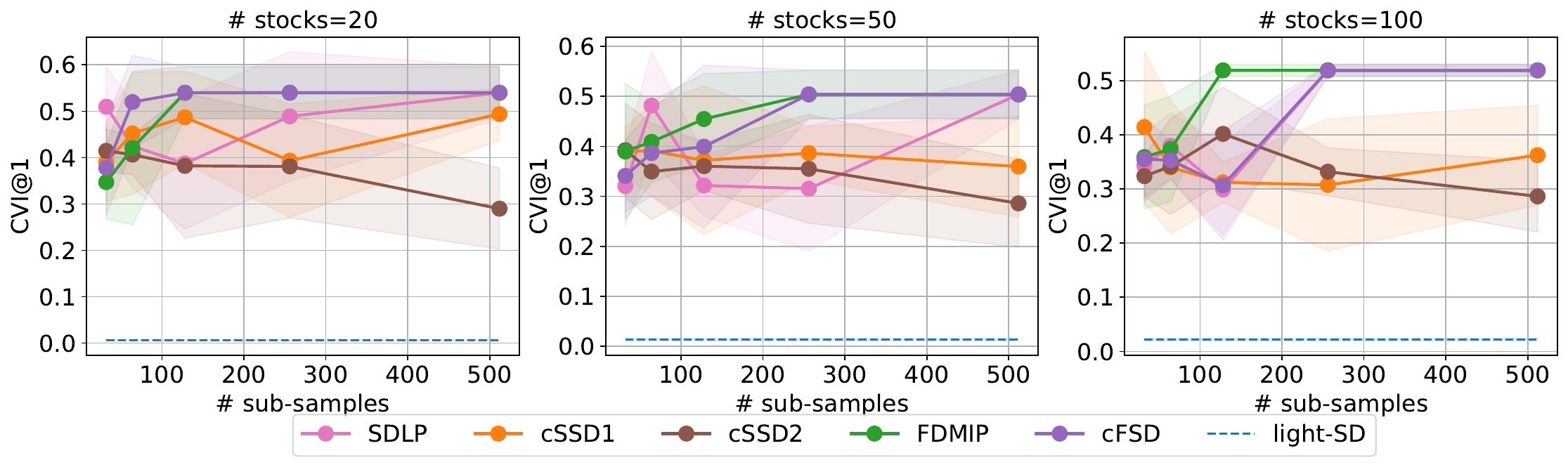} \\
\rotatebox[origin=l]{90}{\quad\quad\quad~ Stochastic OT}   & \includegraphics[width=0.9\textwidth]{./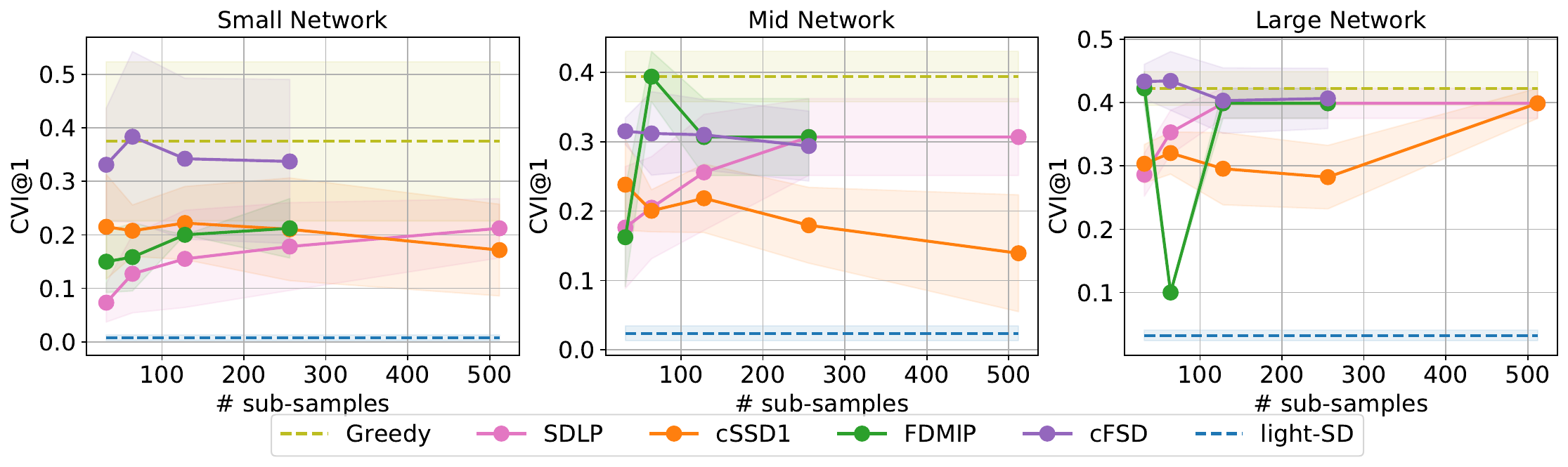}
\end{tabular}
    \caption{Solution quality w.r.t different number of samples for baseline methods on optimization with first order stochastic dominance constraints.}
    \label{fig:num_samples_1st}
\end{figure*}

\begin{figure*}
    \centering
\begin{tabular}{cc}
\rotatebox[origin=l]{90}{\quad\quad\quad \small Portfolio Optimization} & \includegraphics[width=0.8\textwidth]{./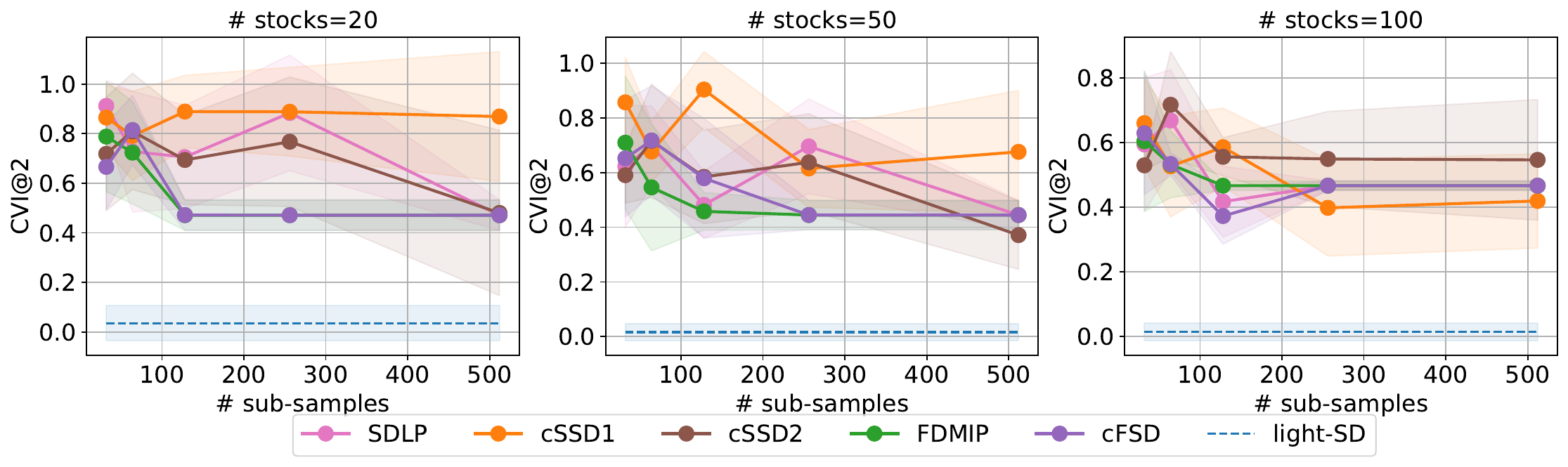} \\
\rotatebox[origin=l]{90}{\quad\quad\quad~ \small Stochastic OT}   & \includegraphics[width=0.8\textwidth]{./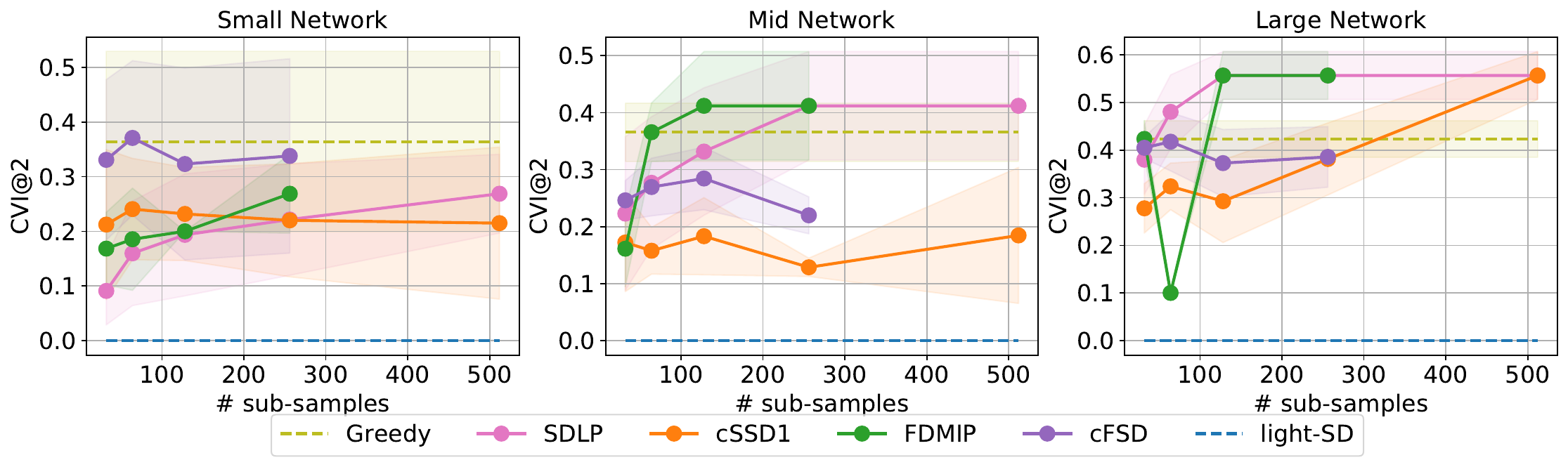}
\end{tabular}
\vspace{-3mm}
    \caption{Solution quality w.r.t different number of samples for baseline methods
    on optimization with second order stochastic dominance constraints.}
    \label{fig:num_samples_2nd}
\end{figure*}

\paragraph{Convergence behaviors}
\figref{fig:convergence_1st} and ~\figref{fig:convergence_2nd} shows the convergence of \algabb with first/second order SD constraints.

\begin{figure*}
    \centering
\begin{tabular}{@{}c@{}c@{}}
     \includegraphics[width=0.495\textwidth]{./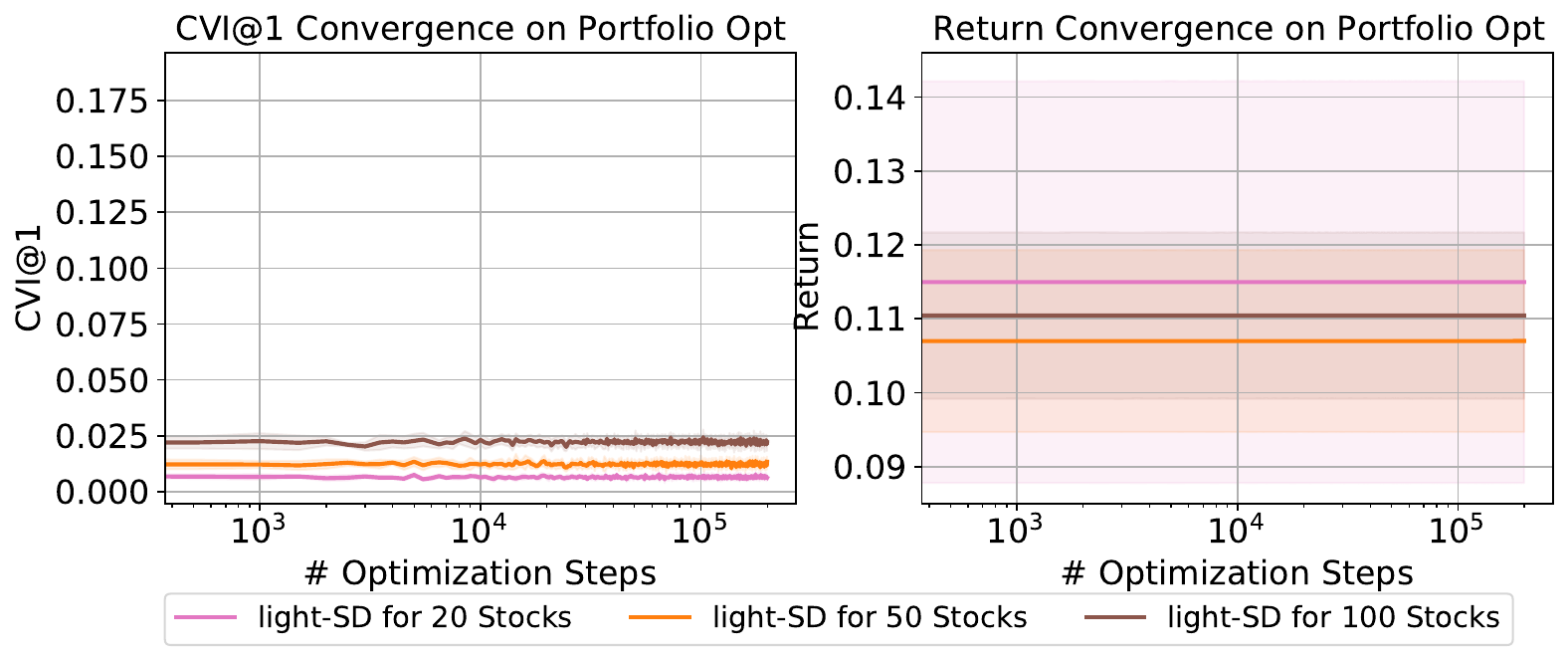} & 
     \includegraphics[width=0.495\textwidth]{./figs/ot_convergence-1.pdf}
\end{tabular}
    \caption{Convergence of \algabb w.r.t CVI@1 and objectives under different problem settings.}
    \label{fig:convergence_1st}
\end{figure*}

\begin{figure*}
    \centering
\begin{tabular}{@{}c@{}c@{}}
     \includegraphics[width=0.495\textwidth]{./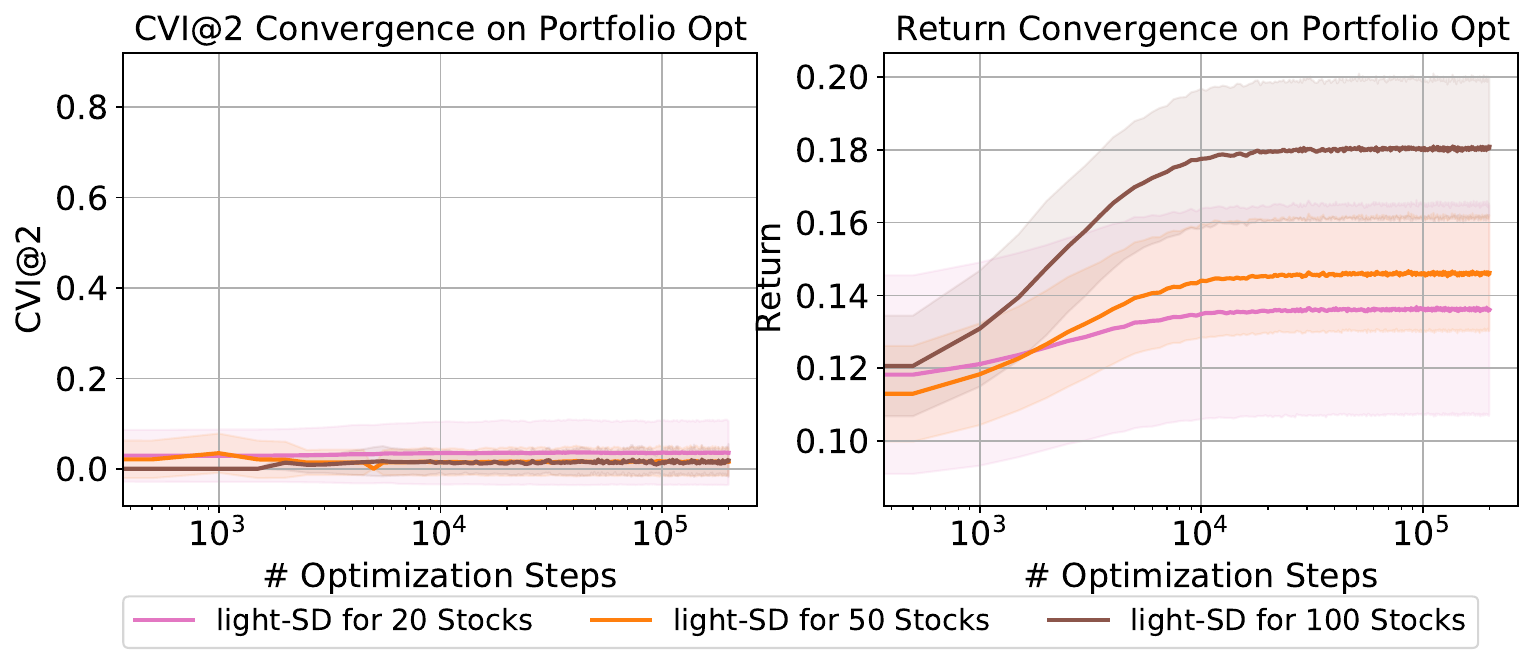} & 
     \includegraphics[width=0.495\textwidth]{./figs/ot_convergence-2.pdf}
\end{tabular}
    \caption{Convergence of \algabb w.r.t CVI@2 and objectives under different problem settings.}
    \label{fig:convergence_2nd}
\end{figure*}

\paragraph{Evaluation on more metrics for the Portfolio Optimization setting}

\begin{table*}[htb]
\centering
\caption{Comparing \algabb{} against reference policy for portfolio optimization, with more evaluation metrics. \label{tab:stock_metrics}}
\begin{tabular}{ccccccc}
\toprule
 & \multicolumn{2}{c}{Standard Deviation} & \multicolumn{2}{c}{SharpeRatio}	& \multicolumn{2}{c}{Largest Drawback} \\
\hline
 & Reference & \algabb{} & Reference & \algabb{} & Reference & \algabb{} \\
10-stocks &	1.67 &	{\bf 1.61} &	1.25 &	{\bf 1.29} &	-9.98 &	{\bf -9.70} \\
20-stocks &	1.38 &	{\bf 1.34} &	1.23 &	{\bf 1.51} &	-9.74 &	{\bf -9.01} \\
50-stocks &	1.23 &	{\bf 1.16} &	1.38 &	{\bf 1.90} &	-8.94 &	{\bf -7.94} \\
100-stocks & {\bf 1.12}  &	1.16 &	1.48 &	{\bf 2.34} &	-8.89 &	{\bf -7.49} \\
\bottomrule
\end{tabular}
\end{table*}

\paragraph{More evaluation metrics for portfolio optimization}
In \tabref{tab:stock_metrics} we present more metrics in terms of the performance of portfolio optimization. As the goal of optimization under stochastic dominance constraints is to minimize the risks, we provide more intuitive metrics in~\tabref{tab:stock_metrics} to demonstrate the effectiveness of \algabb{}. We can see that overall it reduces the variance of the return, and improve the worst case performance. These are indicators of policies with lower risks. Both the expected returns and Sharpe ratios are also improved. 

\end{appendix}

\end{document}